\def\eqref#1{(\ref{#1})}
\def\1{\bm{1}}
\DeclareMathAlphabet{\mathsfit}{\encodingdefault}{\sfdefault}{m}{sl}
\SetMathAlphabet{\mathsfit}{bold}{\encodingdefault}{\sfdefault}{bx}{n}
\newcommand{\E}{\mathbb{E}}
\newcommand{\R}{\mathbb{R}}
\title{Gradient descent with generalized Newton's method}
\author{Zhiqi Bu \thanks{Equal contribution. Code available at \texttt{https://github.com/ShiyunXu/AutoGeN}.} \\
\texttt{woodyx218@gmail.com} \\
\And
Shiyun Xu$^*$ \\
University of Pennsylvania \\
\texttt{shiyunxu@sas.upenn.edu} \\
}
\setlist{leftmargin=5mm}
\numberwithin{equation}{section}
\newtheorem{theorem}{Theorem}
\newtheorem{othertheorem}{othertheorem}[section]
\newtheorem{proposition}[othertheorem]{Proposition}
\theoremstyle{definition}
\newtheorem{definition}[othertheorem]{Definition}
\newtheorem{remark}[othertheorem]{Remark}
\theoremstyle{definition}
\newcommand{\x}{\bm{x}}
\newcommand{\h}{\mathbf{h}}
\renewcommand{\v}{\mathbf{v}}
\newcommand{\g}{\mathbf{g}}
\newcommand{\e}{\bm{e}}
\newcommand{\I}{\mathbf{I}}
\renewcommand{\R}{\mathbb{R}}
\newcommand{\w}{\bm{w}}
\newcommand{\A}{\mathbf{A}}
\renewcommand{\E}{\mathbb{E}}
\renewcommand{\P}{\mathbf{P}}
\newcommand{\F}{\mathbf{F}}
\newcommand{\M}{\mathcal{M}}
\renewcommand{\H}{\mathbf{H}}
\newcommand{\G}{\mathbf{G}}
\newcommand{\opb}{O_p(\frac{1}{\sqrt{B}})}
\newcommand{\tikzmark}[1]{\tikz[overlay,remember picture,baseline] \node [anchor=base] (#1) {};}%
\def\drawredbox{%
\begin{tikzpicture}[remember picture, overlay]
\draw[thick, red] ($(-8.7cm,-3pt)+(begin)$) rectangle ($(begin|-end)+(\linewidth-10.2cm,-2pt)$);
\end{tikzpicture}%
}
\begin{document}

\maketitle

\begin{abstract}
We propose the generalized Newton's method (GeN) --- a Hessian-informed approach that applies to any optimizer such as SGD and Adam, and covers the Newton-Raphson method as a sub-case. Our method automatically and dynamically selects the learning rate that accelerates the convergence, without the intensive tuning of the learning rate scheduler. In practice, our method is easily implementable, since it only requires additional forward passes with almost zero computational overhead (in terms of training time and memory cost), if the overhead is amortized over many iterations. We present extensive experiments on language and vision tasks (e.g. GPT and ResNet) to showcase that GeN optimizers match the state-of-the-art performance, which was achieved with carefully tuned learning rate schedulers. 
% Code to be released at \url{https://github.com/ShiyunXu/AutoGeN}.
\end{abstract}

\section{Introduction}
Deep learning models are trained via the gradient descent $\w_{t+1}=\w_t-\eta_t\P_t^{-1}\g_t$, where $\w_t\in\R^d$ is the model parameters. The model update $\w_t-\w_{t+1}$ consists of a learning rate scheduler $\eta_t\in\R$ that is controlled by several hyperparameters, a pre-conditioner $\P_t\in\R^{d\times d}$ that can depend on the Hessian or Fisher information, and the first-order gradient $\g_t\in\R^d$ of the loss function $L(\w_t)$.

\begin{table}[!htb]
\vspace{-0.2cm}
\caption{Summary of optimizers with $\w_{t+1}=\w_t-\eta_t\P_t^{-1}\g_t$, where $\H$ is the Hessian, $\F$ is the Fisher information, $d$ is the number of model parameters. We highlight that GeN can take advantage of any pre-conditioner $\P_t$.}
\vspace{-0.2cm}
\centering
\resizebox{\linewidth}{!}{
\begin{tabular}{|c|Sc|Sc|c|}
\hline
Method& $\eta_t$ & $\P_t$ &dim($\P_t$)\\\hline\hline
Newton-Raphson    & =1       & $=\H$     & $d^2$      \\ \hline
\makecell{BFGS \citep{broyden1970convergence},%,fletcher1970new,shanno1970conditioning
\\ LBFGS \citep{byrd1995limited}}   & need to tune&$\approx \H$ & $d^2$   \\ \hline
\textcolor{red}{GeN (this work)}     & $={\G^\top\g/\g^\top\H\g}$&\textcolor{red}{Any}& $1$ or $d$           \\ \hline
\makecell{D-adaptation \citep{defazio2023learning}, \\Prodigy \citep{mishchenko2023prodigy},\\ DoG \citep{ivgi2023dog}, \\DoWG \citep{khaled2023dowg}}     & $\approx {\|\w_0-\w_*\|}/{\max \G}$&$=\I$ or $\sqrt{\text{diag}(\F)}$ & $1$ or $d$        \\ \hline
\makecell{AdaHessian \citep{yao2021adahessian},\\ Sophia  \citep{liu2023sophia}    }&need to tune& $\approx\text{diag}(\H)$& $d$\\ \hline
\makecell{Shampoo \citep{gupta2018shampoo},\\ K-FAC \citep{martens2015optimizing}, \\Natural Gradient \citep{amari1998natural} }&need to tune& $\approx \F$& $d$\\ \hline
\makecell{Adam \citep{kingma2014adam}, \\AdamW \citep{loshchilov2017decoupled}, \\AdaGrad \citep{duchi2011adaptive}, \\AdaDelta \citep{zeiler2012adadelta}, \\
RMSProp \citep{hinton2012neural} }&need to tune& $\approx\sqrt{\text{diag}(\F)}$ & $d$ \\ \hline
% AdaBelief & to tune& $\approx \sqrt{\text{diag(Var}(\g_t))}$ \\ \hline
\makecell{SGD \citep{robbins1951stochastic}, \\Heavyball \citep{polyak1964some}, \\NAG \citep{nesterov1983method} } &need to tune& $=\I$ & $1$   \\ \hline
\end{tabular}
}
\label{tab:summaryP}
\end{table}

For large-scale optimization problems with millions to billions of parameters, the training can be significantly costly. For example, large language models are trained with trillions of tokens, on thousands of GPUs, costing millions of dollars \citep{sharir2020cost,biderman2023pythia}, and at high carbon footprint \citep{patterson2021carbon,luccioni2023estimating,dodge2022measuring,touvron2023llama}: GPT-3 (175B, \cite{brown2020language}), LLAMA (7$\sim$70B, \cite{touvron2023llama,touvron2024llama}), Chinchilla (70B, \cite{hoffmann2022training}), PaLM (540B,  \cite{chowdhery2023palm}), and Falcon (7$\sim$180B, \cite{almazrouei2023falcon}) models are trained on $\gtrapprox 1$T tokens; state-of-the-art models require extremely long training time, measured in thousands of PetaFLOPS-days \cite[Figure 2]{almazrouei2023falcon} or millions of GPU hours \citep{llama3modelcard}. 
%, especially LLAMA3 is trained on 15T tokens;
It is therefore computationally prohibitive to instantiate a dense
$\P_t$ or to tune the hyperparameters of $\eta_t$. However, the second-order pre-conditioner and the learning rate scheduler are critical to the fast convergence.

On one hand, a Hessian-informed pre-conditioner is necessary in large-scale model training, as SGD (using no pre-conditioning) empirically does not compete with adaptive optimizers like Adam. However, implementing or approximating the full Hessian matrix induces $O(d^2)$ or even $O(d^3)$ complexity, which is infeasible for large models on current generation of computing devices. In order to compute the pre-conditioner $\P_t$ efficiently, a long list of adaptive optimizers (see \Cref{tab:summaryP} with diagonal $\P_t$) have proposed to only leverage the diagonal of $\P_t$ and/or to replace the Hessian with Fisher information, although some information in the full Hessian may be lost.

On the other hand, manually setting a learning rate scheduler requires a critical balance between the utility --- more hyperparameters the better, and the tuning effort --- fewer hyperparameters the easier. The simplest approach is to set a constant learning rate $\eta_t=\eta$ (only one hyperparameter) through grid search. However, the mini-batch gradient descent with a constant $\eta_t$ may not allow the model to converge even under the convex setting, which is easier to optimize than the non-convex setting in deep learning. Additionally, as we see in \Cref{fig:lr compare}, small $\eta$ converges slow but to better solution, and vice versa for large $\eta$, which depicts the importance of setting the proper $\eta_t$ at each iteration. That is, tuning the learning rate scheduler is essentially selecting $T$ (number of iterations) learning rates to maximize the convergence speed. To accommodate this challenge, a line of schedulers (including the linear decay, warm-up and cosine scheduler) has been proposed to control the learning rate during the training, with a number of hyperparameters. For instance, LLAMA 65B \citep{touvron2023llama} uses a scheduler with 3 hyperparameters: warm-up steps=2000, maximum $\eta_t=1.5e^{-4}$ and final $\eta_t=1.5e^{-5}$. Unfortunately, most schedulers are heuristic and have their limits: they do not guarantee fast convergence and become increasingly difficult to tune as the model and data sizes increase.

% can combine with each other and additional tricks, such as multiple cycles, to formulate more complex schedulers. For example, the Triangular scheduler (with 3 hyperparameters) results from combining linear warmup and linear decay, and is used in ...; CyclicLR scheduler (with 8 hyperparameters in torchvision library) and CosineAnnealingLR (with 3 hyperparameters: $c_{\min}+(c_0-c_{\min})(1+\cos{(t\pi/pT)})/2$) can further adopt exponential decay on the cycles, as adopted in ....; 

% In fact, LLAMA1/2 uses 3 hyperparameters, \url{https://openreview.net/pdf?id=rhDaUTtfsqs} uses 3, \url{https://arxiv.org/pdf/2204.06745.pdf} SimCLR uses 2, ResNet (sec 4.2) uses 4, ViT uses 2, Attention Is All You Need uses 2, GPT1/BERT/roberta uses 2, Mask R-CNN/Faster R-CNN uses 3

\paragraph{Related works}
This work is closely related to previous literature in learning rate schedulers (including the heuristic and the parameter-free methods), optimization with Hessian information, and deep learning system design, which are discussed in \Cref{sec:no back} and \Cref{app:related}.

% Another important branches of related works are the parameter-free methods in deep learning. DoG \cite{ivgi2023dog} and DoWG \cite{khaled2023dowg} are parameter-free dynamic SGD scheduler with convergence guarantees for stochastic convex optimization. Additionally, these methods only work for SGD but not for the adaptive optimizers like Adam. D-adaptation \cite{defazio2023learning} and Prodigy \cite{mishchenko2023prodigy} both improve on AdaGrad by characterizing $||\w_0-\w_*||$, where $\w_*$ is the global minimum. 

% See \Cref{sec:no back} and \Cref{app:related} for more discussion of related works. 

\paragraph{Contribution}
We propose the generalized Newton's method (\textbf{GeN}) --- a Hessian-informed approach which merges the information from the full Hessian into the learning rate, so as to dynamically adapt to the loss landscape and accelerate the convergence. We examine GeN on 5 criteria: automaticity, applicability, scalability, computation efficiency, and convergence speed.

Overall, GeN\footnote{We use `generalized' to mean that (1) Newton-Raphson method can be viewed as a sub-case of GeN and (2) we leverage the generalized inverse without inversing the Hessian matrix.} is an \textbf{automatic} optimizer that is \textbf{applicable} to the general optimizers and work successfully in deep learning without adding much computations. Our efficiency analysis shows that GeN is perfectly \textbf{scalable} to large-scale training and as \textbf{computationally efficient} as existing optimizers. We empirically demonstrate that GeN is highly \textbf{performant} on image classification, text classification, natural language generation, object detection, instance segmentation, recommendation system, and parameter-efficient training (PET).

\begin{figure}[!htb]
    \centering
    \includegraphics[width=0.48\linewidth]{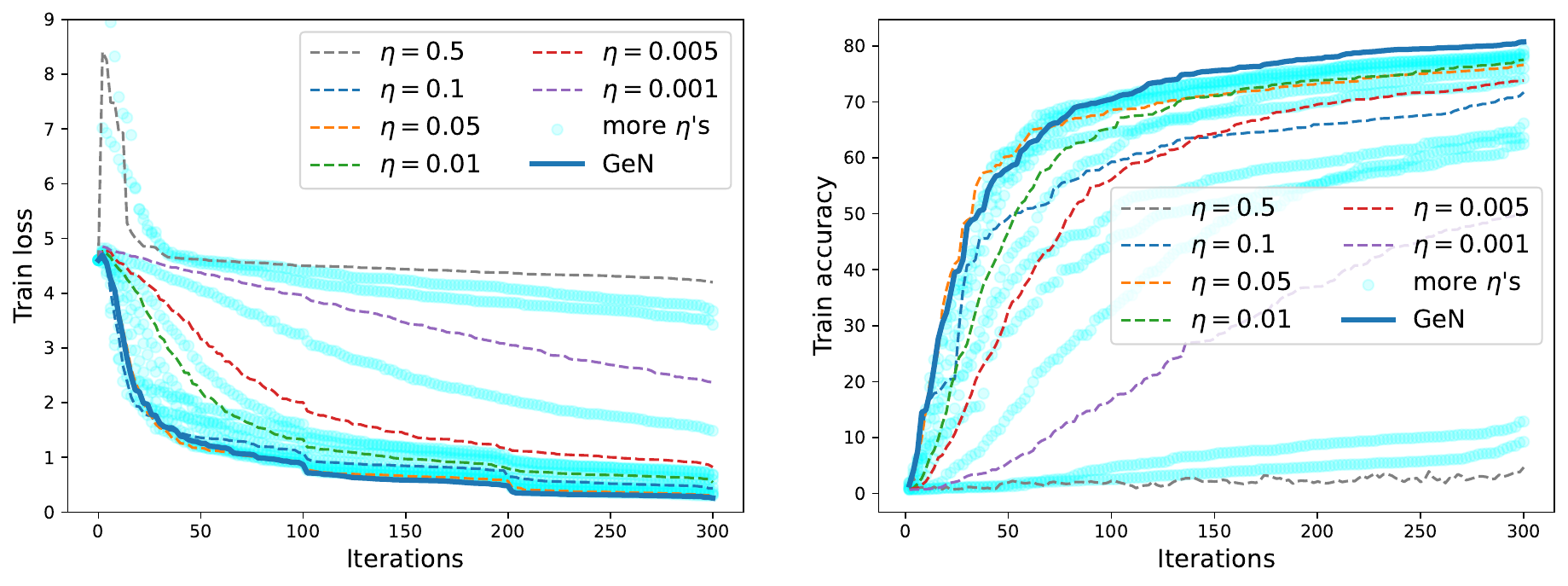}
    \includegraphics[width=0.24\linewidth]{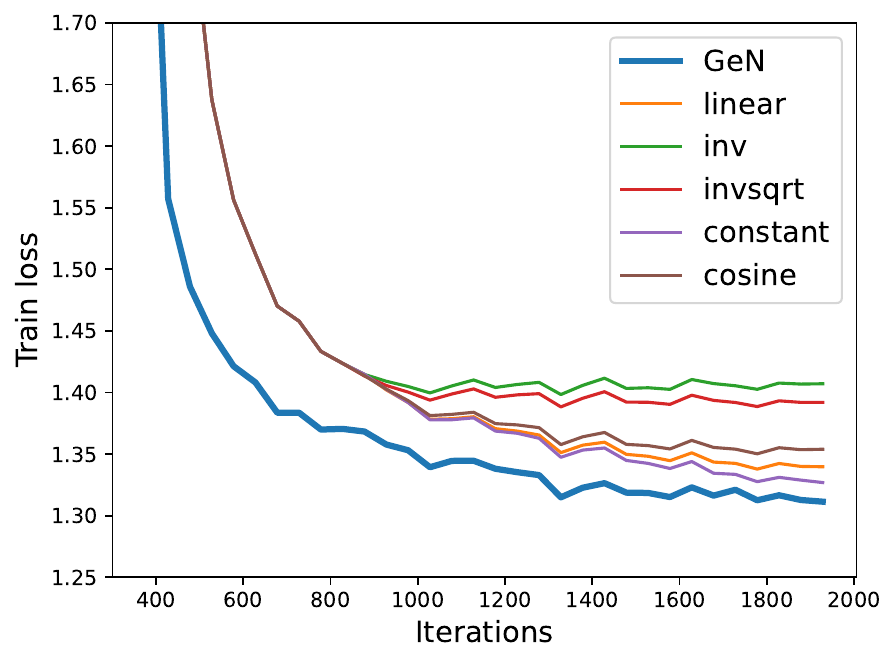}
    \includegraphics[width=0.24\linewidth]{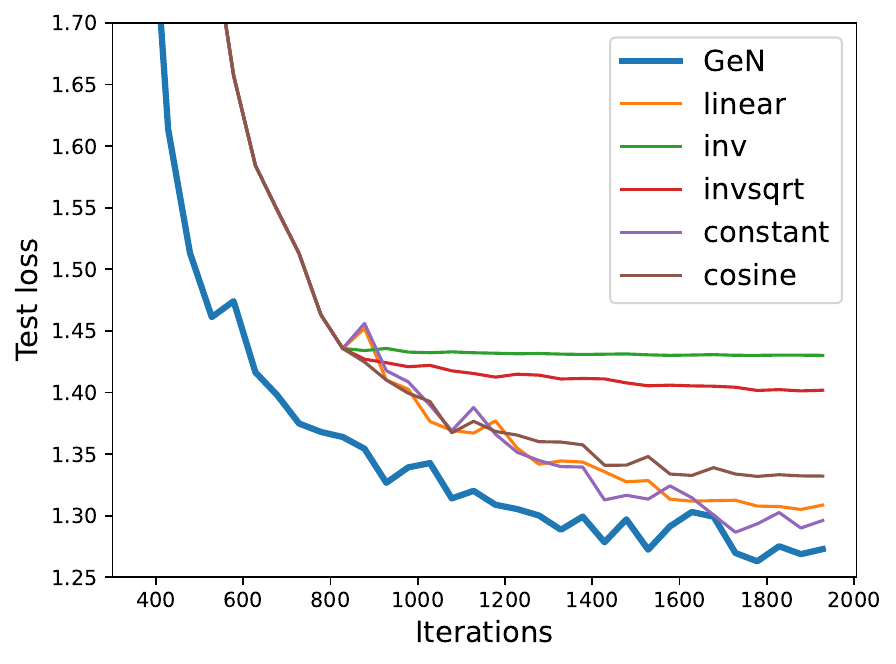}
    \caption{Effects of various learning rate schedulers. Left two: ResNet18 on CIFAR100 dataset, compared with constant learning rates. Right two: GPT2 on E2E dataset, compared with heuristic learning rate schedulers.}
    \label{fig:lr compare}
\end{figure}

\section{Motivation}
\subsection{Notations}
We denote the model parameters as $\w_t\in\R^d$, where $t\leq T$ is the index of iterations; $\{\x_i\}$ are the samples, with or without labels. For any loss function, we denote $L(\w):=\E_{\x} L(\w,\x)$ as the generalization loss, and $\bar L(\w):=\sum_{i=1}^B  L(\w,\x_i)/B$ as the training loss. We denote the mini-batch stochastic gradient as $\g^\text{optim}_t(\nabla \bar L)=\P_t\cdot\nabla \bar L\in\R^d$, where $\nabla \bar L(\w_t):=\frac{1}{B}\sum_i \frac{\partial L(\w_t,\x_i)}{\partial \w_t}$ is the vanilla gradient and $\g_t^\text{optim}$ is the pre-conditioned gradient of any optimizer. For instance, SGD simply gives $\g^\text{SGD}(\nabla \bar L)=\nabla \bar L$; SignSGD (a special case of Adam) post-processes the gradient as $\g^\text{SignSGD}(\nabla \bar L)=\text{sign}(\nabla \bar L)$; gradient clipping gives $\g^\text{clip}(\nabla \bar L)=\min\{C/||\nabla\bar L||,1\}(\nabla \bar L)$ and similarly for the projection; PET (e.g. LoRA) gives $\g^\text{PET}(\nabla \bar L)=\mathbf{m}\odot\nabla \bar L$ where $\mathbf{m}$ is a binary mask and $\odot$ is the element-wise multiplication. We highlight that many optimization algorithms, including but not limited to AdaGrad, Adam, Sophia, momentum, and weight decay, can also be summarized as the post-processing of $\nabla\bar L$.

\subsection{Second-order Taylor expansion}
In deep learning, the models are updated via the gradient descent by various optimizers:
\begin{align}
\w_{t+1}=\w_t-\eta_t \g^\text{optim}(\w_t)
\label{eq:SGD}
\end{align}
Appyling the Taylor expansion on the loss and leveraging \eqref{eq:SGD}, we can derive the loss improvement at the current iteration,
\begin{align}
L(\w_t)-L(\w_{t+1})=L(\w_t)-L(\w_t-\eta_t \g_t^\text{optim})=\eta_t\G_t^\top\g_t^\text{optim}-\frac{\eta_t^2}{2}(\g_t^\text{optim})^\top \H_t\g_t^\text{optim}+O(\eta_t^3).    
\label{eq: full Taylor}
\end{align}
where $\G_t:=\frac{\partial L}{\partial \w_t}$ and $\H_t:=\frac{\partial^2 L}{\partial \w_t^2}$ are the oracle gradient and Hessian, respectively.

In light of \eqref{eq: full Taylor}, we claim that employing the second-order Taylor expansion is (1) necessary, since the first-order Taylor expansion only characterizes a linear approximation of the loss and thus fail to characterize the loss curvature; and (2) sufficient, especially in the small learning rate regime where large models are universally trained and $O(\eta^3)$ is negligible\footnote{The classical convergence analysis of gradient descent also leverages the second-order Taylor expansion $L(w-\eta\G) \leq L(w) + \eta\G^\top\G + \frac{\mathcal{L}\eta^2}{2}||\G||^2$ where $\mathcal{L}$ is the Lipschitz smoothness of $L$. This quadratic function is minimized at $\eta=1/\mathcal{L}$.}. In fact, \eqref{eq: full Taylor} is also used in the theoretical analysis of neural scaling laws \citep{kaplan2020scaling, mccandlish2018empirical}.

We visualize the loss function in \Cref{fig:measure vs lr}, in which a quadratic function with respect to $\eta$ is fitted by ignoring the higher order terms,
\begin{align}
L(\w_t)-L(\w_t-\eta_t \g_t^\text{optim})\approx\Delta L(\g_t^\text{optim},\eta_t):=\eta_t\G_t^\top\g_t^\text{optim}-\frac{\eta_t^2}{2}(\g_t^\text{optim})^\top \H_t\g_t^\text{optim}
\label{eq:lr parabola}
\end{align}

\begin{figure}[!htb]
\centering
\includegraphics[width=0.49\linewidth]{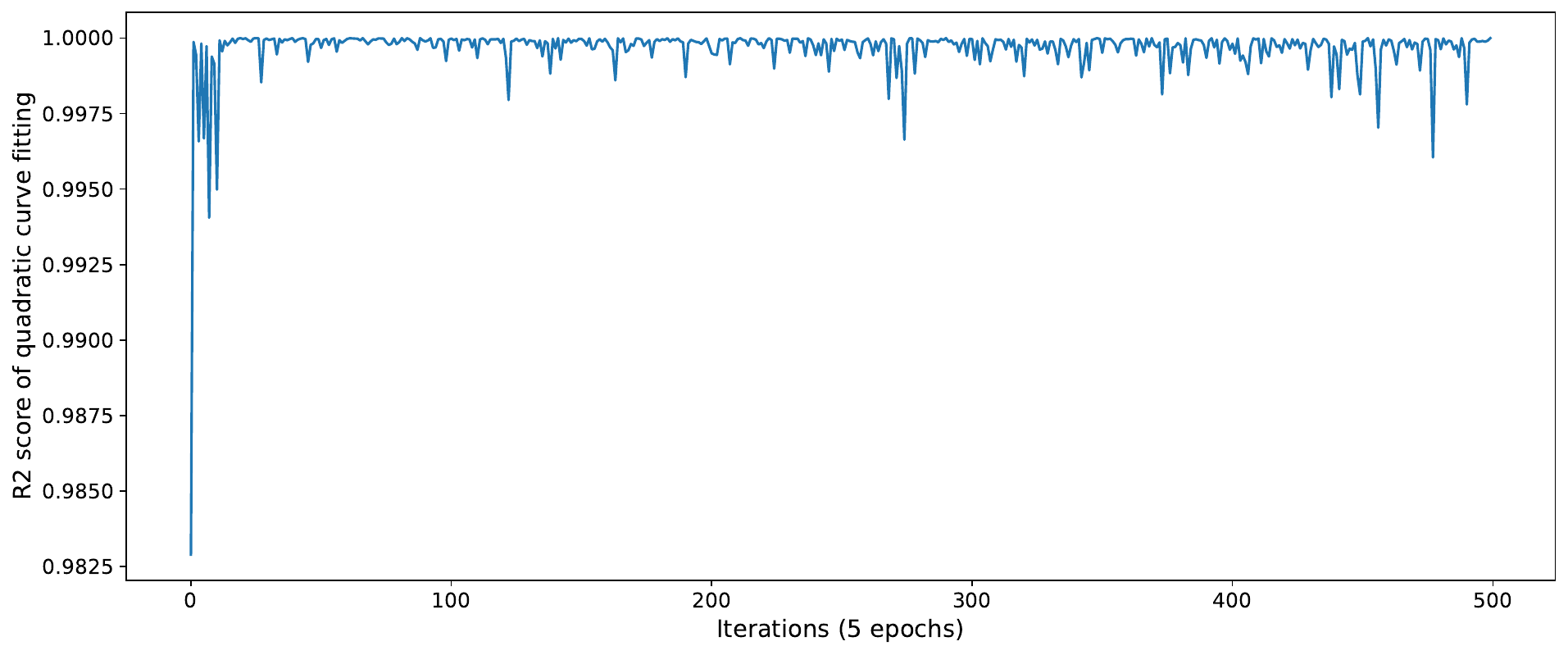}
\includegraphics[width=0.49\linewidth]{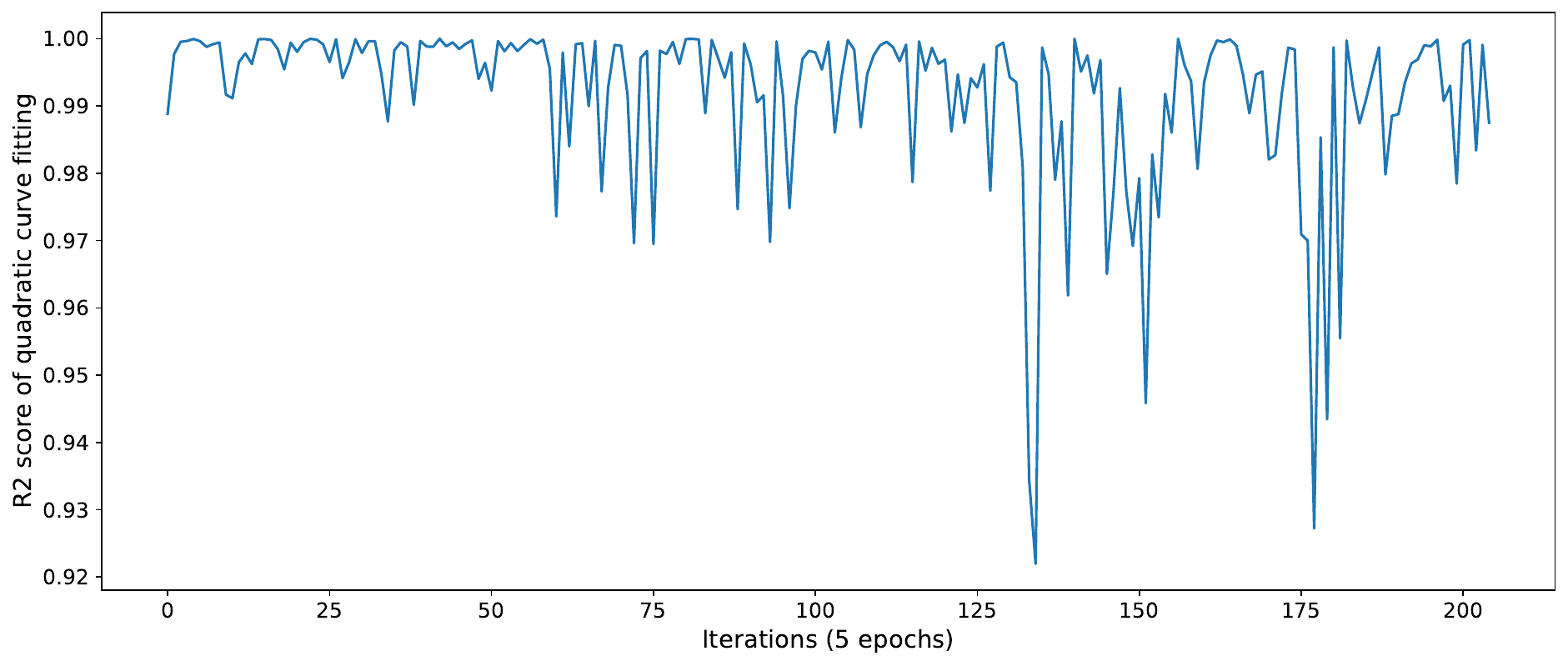}
\includegraphics[width=0.49\linewidth]{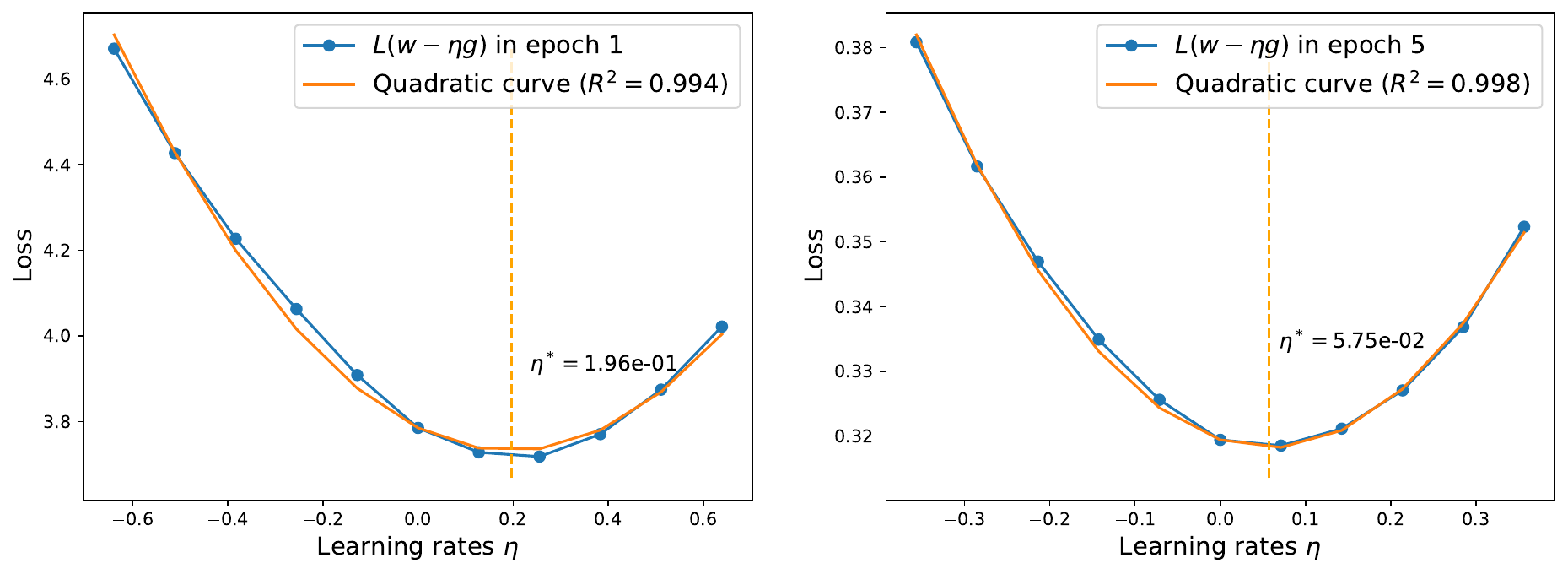}
\includegraphics[width=0.49\linewidth]{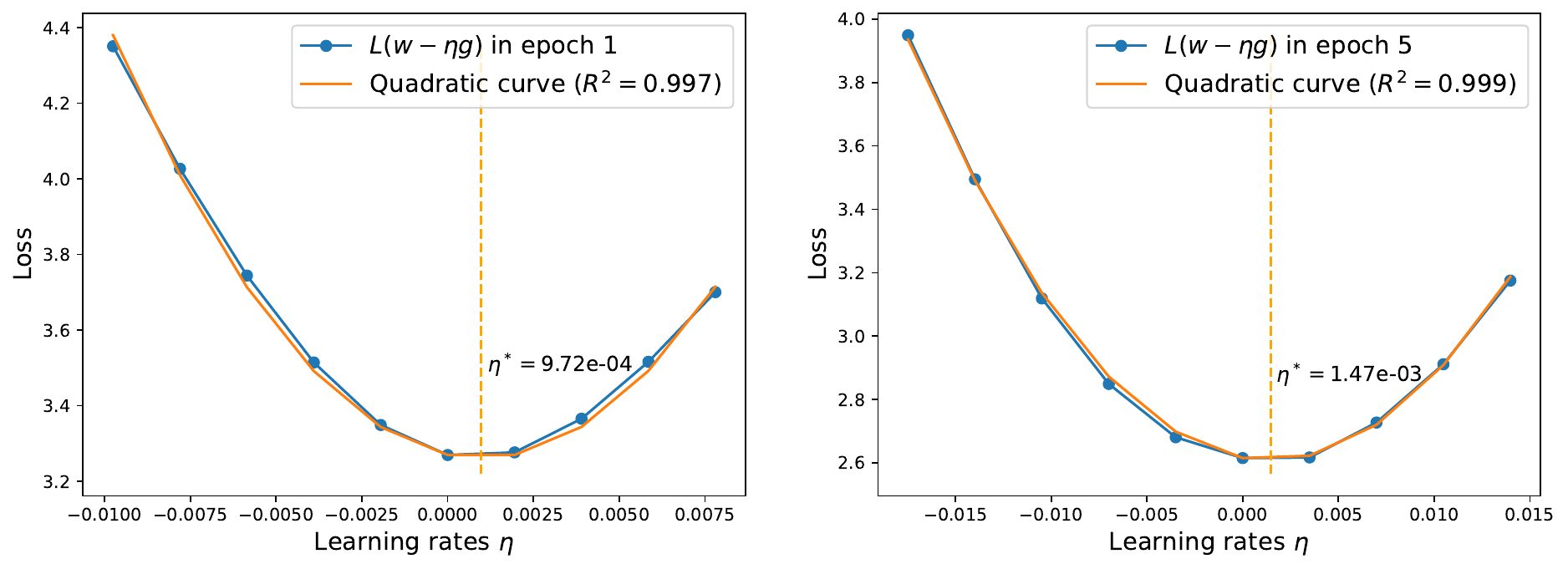}

\caption{Illustration of the second-order Taylor expansion in \eqref{eq:lr parabola}. Left two: ResNet18 on CIFAR100 with SGD. Right two: GPT2 on E2E with AdamW.}
    \label{fig:measure vs lr}
\end{figure}

\section{Methodology}

\subsection{Generalized Newton's method}
% \textcolor{red}{Moreover, existing works mostly study the empirical convergence on the training set, in terms of the gradient norm ||gt|| to 0, the parameter space wt to w*, or the training loss Lt to L* as t...}
Our goal is to minimize the loss $L(\w_t)$ over the domain of learning rate $\eta\in\R$, in particular the next-iteration loss in \eqref{eq:lr parabola} . In fact, if we $\min_{\g,\eta}\Delta L(\g,\eta)$ over both learning rate and descent direction, then the optimal solution is $\eta_t^*\g_t^*=\H_t^{-1}\G_t$ and therefore $\w_{t+1}=\w_t-\H_t^{-1}\G_t$.
This recovers the vanilla Newton's method conditioning on that $\H_t$ is invertible. 

By working with the domain of $\eta$, we effectively reduce the high$(d+1)$-dimensional $\min_{\g,\eta}\Delta L(\g,\eta)$ to a uni-variate problem $\min_{\eta}\Delta L(\g,\eta)$. From \eqref{eq:lr parabola}, it is obvious that the optimal learning rate is
\begin{align}
\eta_\text{GeN}^*(\g_t^\text{optim})=\G_t^\top\g_t^\text{optim}  \Big/ (\g_t^\text{optim})^\top \H_t\g_t^\text{optim}
\label{eq:optimal lr}
\end{align}

\begin{remark}
The closed form of \eqref{eq:lr parabola} allows us to directly derive the optimal learning rate, without resorting to more complex methods such as back-tracking or line searches.
\end{remark}

That is, given $\g_t^\text{optim}$ (determined by the pre-conditioner $\P_t$) and that 
% $\G_t^\top\g_t^\text{optim}>0$ and 
$(\g_t^\text{optim})^\top \H_t\g_t^\text{optim}>0$, our method transforms any base optimizer to a new optimizer by
\begin{align}
\w_{t+1}=\w_t-\eta_{\text{GeN},t}^*\g_t^\text{optim}=\w_t- \frac{(\g_t^\text{optim})^\top \G_t\g_t^\text{optim}}{(\g_t^\text{optim})^\top \H_t\g_t^\text{optim}}
\label{eq:autosgd}    
\end{align}

We term the family of optimizers in \eqref{eq:autosgd} as the generalized Newton's method (GeN), because the vanilla Newton's method is a sub-case by \Cref{thm:generalized}. We highlight that $\g_t^\text{optim}$ can come from any optimizer or even be random: e.g. we equip \eqref{eq:autosgd} with $\g_t^\text{SGD}$ to obtain GeN-SGD from SGD, or with $\g_t^\text{Adam}$ to obtain GeN-Adam from Adam.

\begin{remark}
When the formulation of \eqref{eq:autosgd} is restricted to SGD without the pre-conditioning, this equivalent form of GeN-SGD has also been presented in prior works like LQA \citep{zhu2021automatic} and QLABGrad \citep{fu2024qlabgrad} and ADLER \citep{balboni2023adler}. However, without the general $\g^\text{optim}$ through pre-conditioning, such formulation does not include the Newton's method as a sub-case. We dedicate \Cref{app:compare LQA} to highlight the similarities and differences between GeN and these works.
\end{remark}

\begin{proposition}\label{thm:generalized}
If $\g_t^\text{optim}=\H_t^{-1}\G_t$, \eqref{eq:autosgd} reduces to the Newton's method as $\eta_t^*\g_t^\text{optim}=\H_t^{-1}\G_t$.
\end{proposition}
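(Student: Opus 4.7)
The plan is to verify the claim by direct substitution into the closed-form learning rate \eqref{eq:optimal lr}, using only the symmetry of the Hessian $\H_t$ and its invertibility (the latter being implicit in writing $\H_t^{-1}\G_t$).

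First, I would substitute $\g_t^\text{optim}=\H_t^{-1}\G_t$ into the numerator of $\eta_\text{GeN}^*$ to get $\G_t^\top\H_t^{-1}\G_t$, and into the denominator to get $(\H_t^{-1}\G_t)^\top\H_t(\H_t^{-1}\G_t)$. The key algebraic step is then to collapse the denominator: since $\H_t$ is symmetric, $\H_t^{-\top}=\H_t^{-1}$, hence
\begin{align*}
(\H_t^{-1}\G_t)^\top\H_t(\H_t^{-1}\G_t) = \G_t^\top\H_t^{-1}\H_t\H_t^{-1}\G_t = \G_t^\top\H_t^{-1}\G_t,
\end{align*}
which coincides with the numerator. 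Therefore $\eta_{\text{GeN},t}^*=1$, and multiplying by $\g_t^\text{optim}$ yields $\eta_{\text{GeN},t}^*\g_t^\text{optim}=\H_t^{-1}\G_t$, so \eqref{eq:autosgd} becomes $\w_{t+1}=\w_t-\H_t^{-1}\G_t$, the Newton--Raphson update.

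There is no real obstacle here: once $\H_t$ is symmetric and invertible and the denominator is nonzero (which requires $\G_t\neq 0$ together with positive-definiteness of $\H_t$ at $\w_t$, an assumption already implicit in \eqref{eq:autosgd}), the proof is a one-line cancellation. The only subtlety worth flagging in passing is the positivity assumption $(\g_t^\text{optim})^\top\H_t\g_t^\text{optim}>0$ stated just before \eqref{eq:autosgd}; with $\g_t^\text{optim}=\H_t^{-1}\G_t$ this reduces to $\G_t^\top\H_t^{-1}\G_t>0$, which is automatic when $\H_t\succ 0$ and $\G_t\neq 0$, so the reduction is well-defined.
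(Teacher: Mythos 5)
Your proof is correct and matches the paper's (implicit) argument exactly: direct substitution of $\g_t^\text{optim}=\H_t^{-1}\G_t$ into \eqref{eq:optimal lr}, with the symmetry of $\H_t$ collapsing the denominator to the numerator so that $\eta^*_{\text{GeN},t}=1$. Your additional remark that the positivity condition $(\g_t^\text{optim})^\top\H_t\g_t^\text{optim}>0$ reduces to $\G_t^\top\H_t^{-1}\G_t>0$ is a sensible sanity check but not a departure from the paper's reasoning.
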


Another way to understand \eqref{eq:autosgd} is via the generalized right inverse\footnote{For a row vector $\A\in\R^{1\times d}$, its left inverse does not exist since rank$(\A)=1$; its right inverse is a column vector $\A_R^{-1}\in\R^{d\times 1}$ such that $\A\A_R^{-1}=1$.} (see \Cref{def:generalized inv}). For the simplicity of presentation, we drop the super-script of $\g_t^\text{optim}$ from now on. We can write
\begin{align}
(\g_t^\top\H_t)_R^{-1}=\g_t/\g_t^\top\H_t\g_t \Longrightarrow \eta_t^*\g_t=(\g_t^\top\H_t)_R^{-1}\g_t^\top\G_t
\label{eq:general inverse}
\end{align}
which resembles the Newton's update $\H_t^{-1}\G_t$ but $\H_t$ and $\G_t$ are projected along the direction of $\g_t$:
$$\w_{t+1}=\w_t-\H_t^{-1}\G_t 
\longrightarrow  \g_t^\top\H_t\w_{t+1}=\g_t^\top\H_t\w_t-\g_t^\top\G_t\longleftarrow\w_{t+1}=\w_t-(\g_t^\top\H_t)_R^{-1}\g_t^\top\G_t$$

\begin{remark}
GeN-SGD is scale-invariant, as $\frac{\g_t^\top \G_t\g_t}{\g_t^\top \H_t\g_t}$ does not change if $\g_t$ is multiplied by a factor of $c\in\R$, i.e. $\g_t\to c\g_t$. This indicates that GeN-SGD (but not GeN-Adam) is stable w.r.t. the vanishing/exploding gradient and does not need the gradient clipping.
\end{remark}

\subsection{Deriving $\eta^*$ without back-propagation}
\label{sec:no back}
To compute $\eta^*$ in \eqref{eq:optimal lr} for GeN optimizers, or equivalently to compute $\G_t^\top\g_t$ and especially $\g_t^\top\H_t\g_t$, the straight-forward but inefficient approach is to compute or approximate $\G_t$ and the full $\H_t$. For example, $\H_t$ can be approximated iteratively by BFGS methods, although instantiating and inverting an $\R^{d\times d}$ matrix is prohibitively expensive for large models. More commonly, $\H_t$ is approximated by its low-rank representation (e.g. K-FAC) or its diagonal (e.g. AdaHessian, Sophia), which is usually combined with a replacement by the Fisher information (e.g. Adam, AdaGrad). However, these approximations incur at least $O(d)$ memory overhead, and may be sub-optimal in performance due to the gap between the approximated Hessian and the full Hessian.

Alternatively, we can estimate $\g_t^\top\H_t\g_t$ via the Hessian-vector product of $\H_t\g_t$, without directly accessing $\H_t$: firstly back-propagating on $L$ gives $\g_t$ and then back-propagating on $\g_t(\w_t)^\top \g_t$ gives $\H_t\g_t$. 
In fact, all above-mentioned methods (except BFGS and Fisher-related methods) need the Hessian-vector product\footnote{Estimating the diagonal Hessian needs the Hutchinson method and Hessian-vector product, where $\frac{1}{K}\sum_{k\leq K}\v_k\odot\H\v_k\to\text{diag}(\H)$ as $K\to\infty$ for $\v_k\sim N(0,I)$. Because the precision of these methods relies on computing $\v_k\odot\H\v_k$ for $K$ rounds, the training is very slow as the cost is $K$ times that of a standard back-propagation, say $K=20$ in Sophia and $K=100$ in PyHessian.}, which is not supported in large-scale distributed learning such as DeepSpeed  \citep{rasley2020deepspeed}, Megatron \citep{shoeybi2019megatron}, and FSDP \citep{FairScale2021}, because the computation graph is complicated and interferes with the communication orchestra. 
In summary, existing methods suffer from two constraints: they have to either approximate $\H_t$ and sacrifice the performance, or to rely on the Hessian-vector product and sacrifice the efficiency and scalability. 

In stark contrast, we can overcome these constraints by using multiple additional forward passes to efficiently compute $\G_t^\top\g_t$ and $\g_t^\top\H_t\g_t$ without accessing $\G_t$ and $\H_t$, which is universally easy-to-implement because the forward pass is a fundamental operation in any deep learning system. To be specific, we apply gradient descent with different learning rates and derive the quadratic function in \eqref{eq:lr parabola}. Our approach can be viewed as an extension of LQA \citep{zhu2021automatic}, but allowing more flexibility while enhancing the stability and speed of convergence (see \Cref{alg:autoSGD} and \Cref{rem:flexibility}). As an extension, our method can use more forward passes to fit a higher-order polynomial than the second-order one in \eqref{eq:lr parabola}.

% (Summarize the efficiency and refer details to later [compare Hutchinson for diagH, second-back, and ours speed and memory])

\subsection{Algorithm}
\label{sec:algo}
We now present a quadratic curve fitting approach to estimate the optimal learning rate $\eta_\text{GeN}^*$ in \eqref{eq:optimal lr}:
\begin{align}
\eta^*(\Gamma)= b^*/A^* \text{ where }
A^*,b^*=
\underset{A\in\R, b\in\R}
{\text{argmin}}\sum_{\eta\in\Gamma}\left|L(\w_t-\eta \g_t^\text{optim})-L(\w_t)+b\eta-A\frac{\eta^2}{2}\right|^2.
\label{eq:eta star}
\end{align}
in which $\Gamma$ is a set of learning rates, e.g. $\Gamma=\{-\eta_{t-1},0,\eta_{t-1}\}$ in \Cref{alg:autoSGD}, and the objective function leverages the Taylor expansion in \eqref{eq:lr parabola}. Therefore, we obtain
$$\G_t^\top\g_t^\text{optim}\approx b^* \text{ and } (\g_t^\text{optim})^\top\H_t\g_t^\text{optim}\approx A^* \Longrightarrow \eta^* \approx \eta^*_\text{GeN}.$$
In fact, a mathematical equivalence can be established between this curve fitting and the finite difference (see \Cref{proof:FD and fit}). Especially, for 3-point $\Gamma=\{-\eta_{t-1},0,\eta_{t-1}\}$, the finite difference approach from \cite{zhu2021automatic} gives an estimate that is equal to \eqref{eq:eta star}:
\begin{align}
\eta^*_\text{LQA}(\Gamma)=\frac{\eta_{t-1}}{2}\frac{L_+-L_-}{L_+-2L_0+L_-} \text{ where $L_\pm:=L(\w_t\pm\eta_{t-1}\g_t^\text{optim})$}
\label{eq:LQA}
\end{align}
However, on one hand, \eqref{eq:LQA} is not generalizable as it takes different forms whenever $\Gamma$ changes (see an example of $\eta_\text{LQA}^*(\{0,\eta,2\eta\})$ in \Cref{proof:FD and fit}). On the other hand, the formulation is complicated when $\Gamma$ takes more than 3 points (see an example of 5 points in \eqref{eq:fd5}), while \eqref{eq:eta star} easily extends to any number of points as demonstrated in \Cref{fig:measure vs lr}. 

In practice, we leverage the curve fitting to implement \eqref{eq:autosgd} via an efficient algorithm in \Cref{alg:autoSGD}.

% To see this, the finite difference uses $\G_t^\top\g_t^\text{optim}\approx \frac{L_+-L_-}{2\eta_{t-1}}$ and $(\g_t^\text{optim})^\top\H_t\g_t^\text{optim}\approx \frac{L_+-2L_0+L_-}{\eta_{t-1}^2}$; the curve fitting uses $\G_t^\top\g_t^\text{optim}\approx b^*$ and $(\g_t^\text{optim})^\top\H_t\g_t^\text{optim}\approx A^*$ from

\begin{algorithm}[!htb]
\caption{Generalized Newton's optimizers (GeN), e.g. $\gamma=0.9, \Phi=8$}
\begin{algorithmic}[1]
\For{$t\in 1,\cdots,T$}
\State Compute loss $L_0=L(\w_t)$ by the standard forward pass
\State Compute gradient $\g_t^\text{optim}(\w_t)$ by the back-propagation on $ L_0$
\tikzmark{begin}
\If{$t$ mod $\Phi==0$:}
\State Compute $ L_\pm= L(\w_t\pm\eta_{t-1}\g_t^\text{optim})$ by two forward passes
\State Fit the quadratic function via \eqref{eq:eta star}: $\{-\eta_{t-1},0,\eta_{t-1}\}\to \{ L_-,  L_0,  L_+\}$
\State Extract $A^*,b^*$ from the quadratic function and derive $\eta^*=b^*/A^*$
\If{$A^*>0, b^*>0, \text{R2 score}>0.99$}
\State Update the learning rate $\eta_t=\gamma\eta_{t-1}+(1-\gamma)\eta^*$
\EndIf
\EndIf
\tikzmark{end}
\drawredbox

\State Update $\w_{t+1}=\w_t-\eta_t \g_t$
\EndFor
\end{algorithmic}
\label{alg:autoSGD}
\end{algorithm}

\begin{remark}
We discuss the flexible designs of \Cref{alg:autoSGD} and extend the discussion in \Cref{app:design alg}.
\begin{itemize}
\item In line 3, GeN can operate on $\g_t^\text{optim}$ from any base optimizers such as SGD, AdamW and PET.
\item In line 4, setting a large $\Phi$ significantly amortizes the computational overhead (see \Cref{sec:lazy lr}, where GeN optimizers are almost as fast as the base optimizers).
\item In line 5, the forward passes are cheaper than line 2, in that they do not store the activation tensors and save the memory which translates to faster training. In addition, we can use more forward passes to fit \eqref{eq:lr parabola}, incurring more computation cost but reducing the variance in estimation of $\eta^*$. Notice that the additional computation can be easily amortized through $\Phi$.
\item In line 6, we highlight that $\{-\eta_{t-1},0,\eta_{t-1}\}$ is symmetric and auto-regressive, without introducing a new hyperparameter. The choice is justified in \Cref{proof:FD and fit}. 
\item In line 8, we ensure the convexity ($A^*>0$), the positivity ($b^*>0$) and the goodness of fit (R2 score $>0.99$) for the update of learning rate.
% To derive \eqref{eq:lr parabola}, we can either fit a parabolic curve or equivalently use the finite difference method.
\item \Cref{alg:autoSGD} is an approximation to the GeN method in \eqref{eq:autosgd}, since \eqref{eq:eta star} approximates $\eta_\text{GeN}$ in \eqref{eq:optimal lr}. We also give \Cref{alg:autoSGD exact} that implements GeN exactly as described in \Cref{sec:no back}.
\end{itemize}
\label{rem:flexibility}
\end{remark}

% \begin{remark}
% Moreover, \eqref{eq:lr parabola} suggests that any $\eta\in (0,2\eta_t^*)$ results in at least some improvement $L(\w_t)>L(\w_{t+1})$, but the performance is worse when $\eta>2\eta_t^*$. Therefore, a small learning rate is preferred over large one that may lead to divergence.
% \end{remark}

% We highlight that $\eta^*$ can fall within or outside of the interval $[\eta_{(1)},\eta_{(2)}]$, regardless of the choice of $\eta_{(1)},\eta_{(2)}$.

\subsection{Error analysis for optimal learning rate}
We now analyze the estimation error of $\eta^*$ in \eqref{eq:eta star}, with batch size $B$ and mini-batch loss $\bar L(\w_t)$. The estimation error consists of the sub-sampling error by using the mini-batch, and the precision error from fitting \eqref{eq:lr parabola}.
We note that the analysis is based on the closed form of $\eta^*$, which is available to us through $\eta_\text{LQA}^*$ in \eqref{eq:LQA}.

\begin{proposition}
\label{prop:opb error}
The estimation error of $\eta^*$ in \eqref{eq:optimal lr} by mini-batch losses $(\bar L_0, \bar L_+, \bar L_-)$ is $\opb+O(\eta_{t-1}^2)$ where $B$ is batch size and $\eta_{t-1}$ is the learning rate from the previous iteration.
\end{proposition}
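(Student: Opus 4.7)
The plan is to start from the closed-form expression $\eta^*_\text{LQA}$ in \eqref{eq:LQA}, which by the equivalence noted in the paper also realizes the curve-fitting estimator from \eqref{eq:eta star} for the symmetric 3-point grid $\Gamma=\{-\eta_{t-1},0,\eta_{t-1}\}$. Denote the mini-batch version by
\[
\hat\eta^*=\frac{\eta_{t-1}}{2}\,\frac{\bar L_+-\bar L_-}{\bar L_+-2\bar L_0+\bar L_-},
\qquad
\eta^*_\text{GeN}=\frac{\G_t^\top\g_t}{\g_t^\top\H_t\g_t},
\]
where $\bar L_\pm=\bar L(\w_t\pm\eta_{t-1}\g_t)$ and $\bar L_0=\bar L(\w_t)$. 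I would decompose the total error $\hat\eta^*-\eta^*_\text{GeN}$ into a \emph{truncation} part (using the mini-batch quantities $\bar\G,\bar\H$ but discarding higher-order terms of $\eta_{t-1}$) and a \emph{sub-sampling} part ($\bar\G-\G,\bar\H-\H$), and bound them separately.

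For the truncation part, I would Taylor-expand $\bar L_\pm$ around $\w_t$ to fourth order in $\eta_{t-1}$, so that the numerator $\bar L_+-\bar L_-=2\eta_{t-1}\bar\G_t^\top\g_t+O(\eta_{t-1}^3)$ (the even-order terms cancel by symmetry) and the denominator $\bar L_+-2\bar L_0+\bar L_-=\eta_{t-1}^2\,\g_t^\top\bar\H_t\g_t+O(\eta_{t-1}^4)$ (the odd-order terms cancel). Substituting yields
\[
\hat\eta^*=\frac{\bar\G_t^\top\g_t+O(\eta_{t-1}^2)}{\g_t^\top\bar\H_t\g_t+O(\eta_{t-1}^2)}
=\frac{\bar\G_t^\top\g_t}{\g_t^\top\bar\H_t\g_t}+O(\eta_{t-1}^2),
\]
where the last step uses a first-order expansion of the ratio, valid under the assumption that the denominator is bounded away from zero (which is guaranteed by the convexity check $A^*>0$ in \Cref{alg:autoSGD} together with smoothness of $L$).

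For the sub-sampling part, since $\bar\G_t=\tfrac1B\sum_i\nabla L(\w_t,\x_i)$ and analogously $\bar\H_t=\tfrac1B\sum_i\nabla^2 L(\w_t,\x_i)$ are sample means of i.i.d.\ terms with finite second moments, the central limit theorem gives $\bar\G_t-\G_t=\opb$ and $\bar\H_t-\H_t=\opb$ componentwise. A standard linearization of $f(\bar\G,\bar\H)=\bar\G^\top\g/\g^\top\bar\H\g$ around $(\G,\H)$ then produces
\[
\frac{\bar\G_t^\top\g_t}{\g_t^\top\bar\H_t\g_t}-\eta^*_\text{GeN}
=\frac{(\bar\G_t-\G_t)^\top\g_t}{\g_t^\top\H_t\g_t}-\eta^*_\text{GeN}\frac{\g_t^\top(\bar\H_t-\H_t)\g_t}{\g_t^\top\H_t\g_t}+o_p(1/\sqrt B)=\opb.
\]
Combining the two displays yields $\hat\eta^*-\eta^*_\text{GeN}=\opb+O(\eta_{t-1}^2)$, as claimed.

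The main obstacle is controlling the ratio of two stochastic quantities where both numerator and denominator contain order-$1/\sqrt B$ fluctuations plus truncation error of order $\eta_{t-1}^2$; the linearization has to be carried out carefully to ensure that cross-terms like $\opb\cdot O(\eta_{t-1}^2)$ are genuinely dominated by the stated rate and that the denominator stays bounded away from zero with high probability (so that the inverse is well-defined). A secondary subtlety is that $\g_t$ itself is computed from the mini-batch, but since \eqref{eq:eta star} conditions on the descent direction (treating $\g_t^\text{optim}$ as fixed when fitting the parabola), this does not affect the stated bound.
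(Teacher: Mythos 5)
Your proof is correct and follows essentially the same route as the paper's: both start from the closed-form three-point estimator \eqref{eq:LQA}, attribute an $\opb$ term to mini-batch sub-sampling via the central limit theorem, and attribute an $O(\eta_{t-1}^2)$ term to the truncation precision of the symmetric finite-difference stencil. The one point where you are more careful than the paper is the order of the two steps: the paper applies the CLT to the losses $\bar L_\pm$ directly and then perturbs the ratio, which glosses over the fact that the second difference $\bar L_+-2\bar L_0+\bar L_-$ is itself only $O(\eta_{t-1}^2)$; by Taylor-expanding first and applying the CLT to $\bar\G_t$ and $\bar\H_t$, you exploit the correlation of the three losses (they are evaluated on the same mini-batch) so that the powers of $\eta_{t-1}$ cancel in the ratio before the stochastic perturbation enters, and the delta-method linearization then cleanly yields the $\opb$ term.
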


Empirically, the sub-sampling error $\opb$ is dominant, compare to the precision error $O(\eta^2)$. As a consequence, we advocate to apply GeN optimizers with large batch size for the best performance.

\begin{figure}[!htb]
    \centering
    \includegraphics[width=0.47\linewidth]{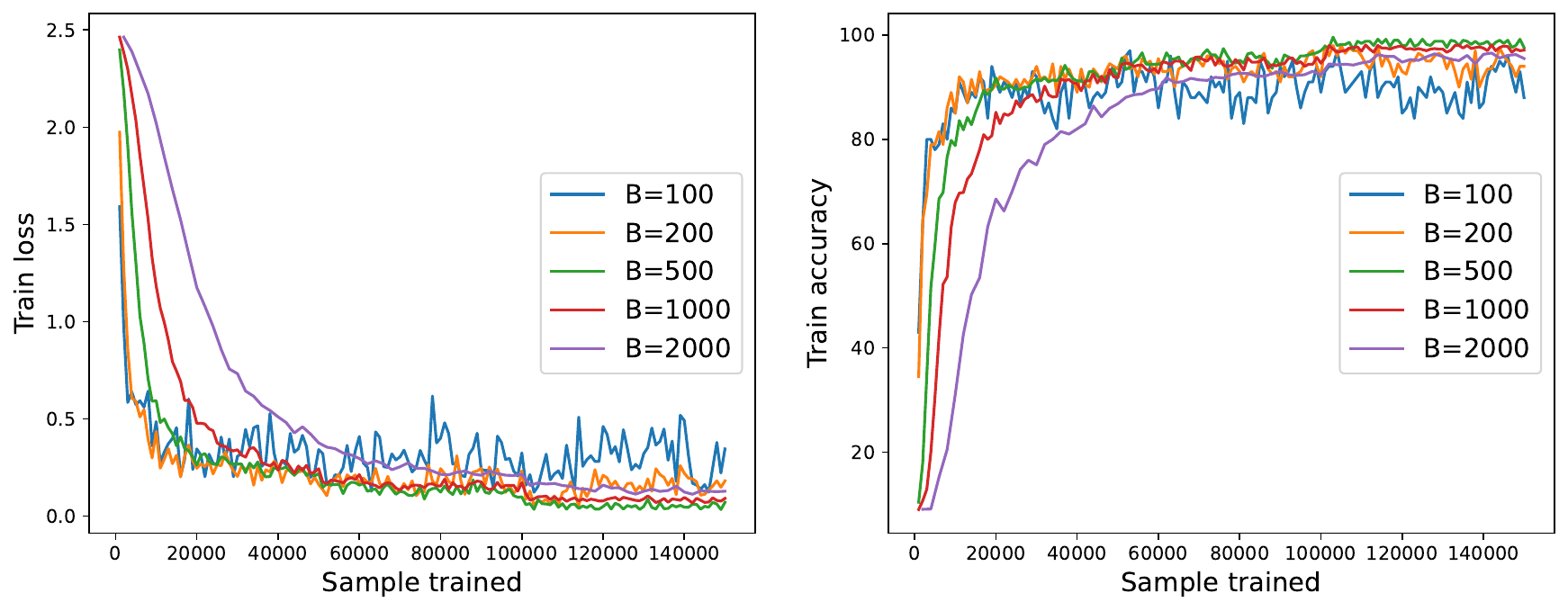}
\includegraphics[width=0.47\linewidth]{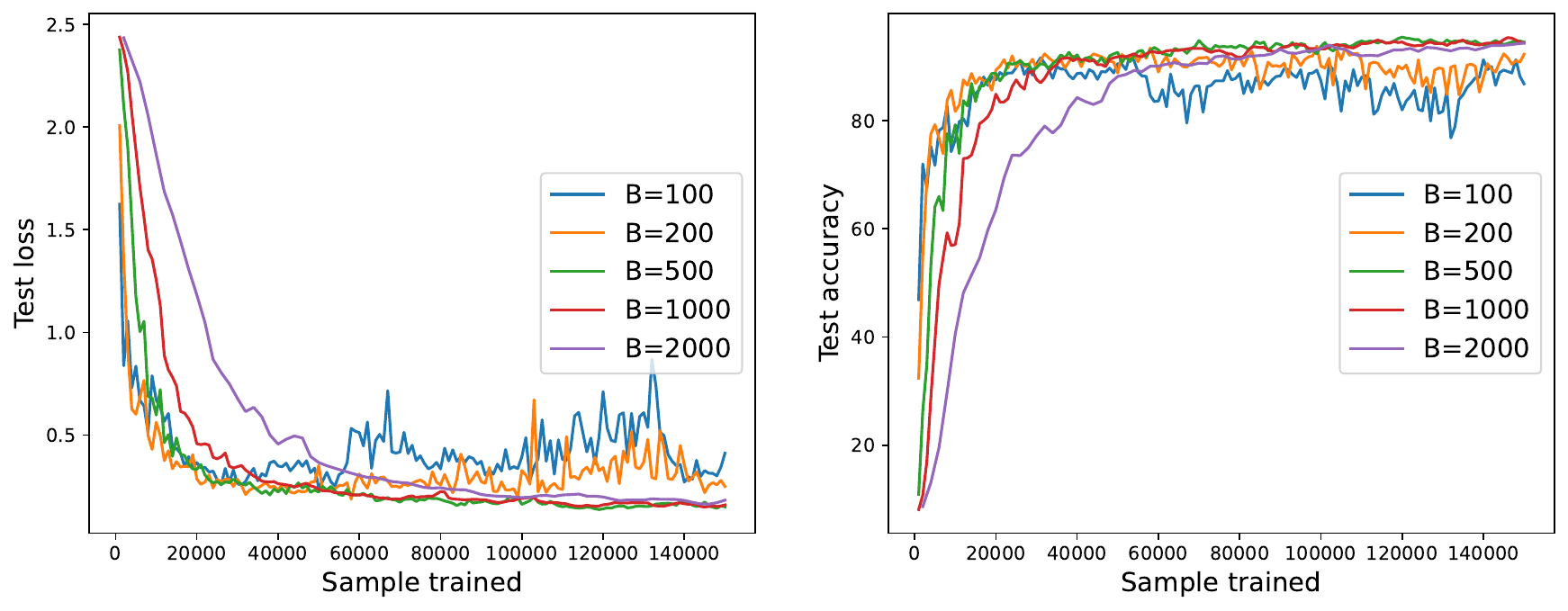}
    \caption{Convergence of ResNet18 on CIFAR10, optimized by GeN-SGD with various batch sizes.}
    \label{fig:different B}
\end{figure}

\section{Efficiency and Scalability Analysis}
In this section, we show that GeN optimizers are as computationally efficient and scalable (in terms of the utility and the system design) as their base optimizers. 
We train CIFAR10 \citep{krizhevsky2009learning} on ResNet 18, 34, 50, 152 \citep{he2016deep} and ViT tiny, small, base and large \citep{dosovitskiy2020image}. For fine tuning, we use the pretrained models from the PyTorch Image Models framework  \citep{rw2019timm}.

For the utility, we observe on CIFAR10 that GeN optimizers work well with different model sizes across different architectures.
\begin{figure}[!htb]
\centering
\includegraphics[width=0.49\linewidth]{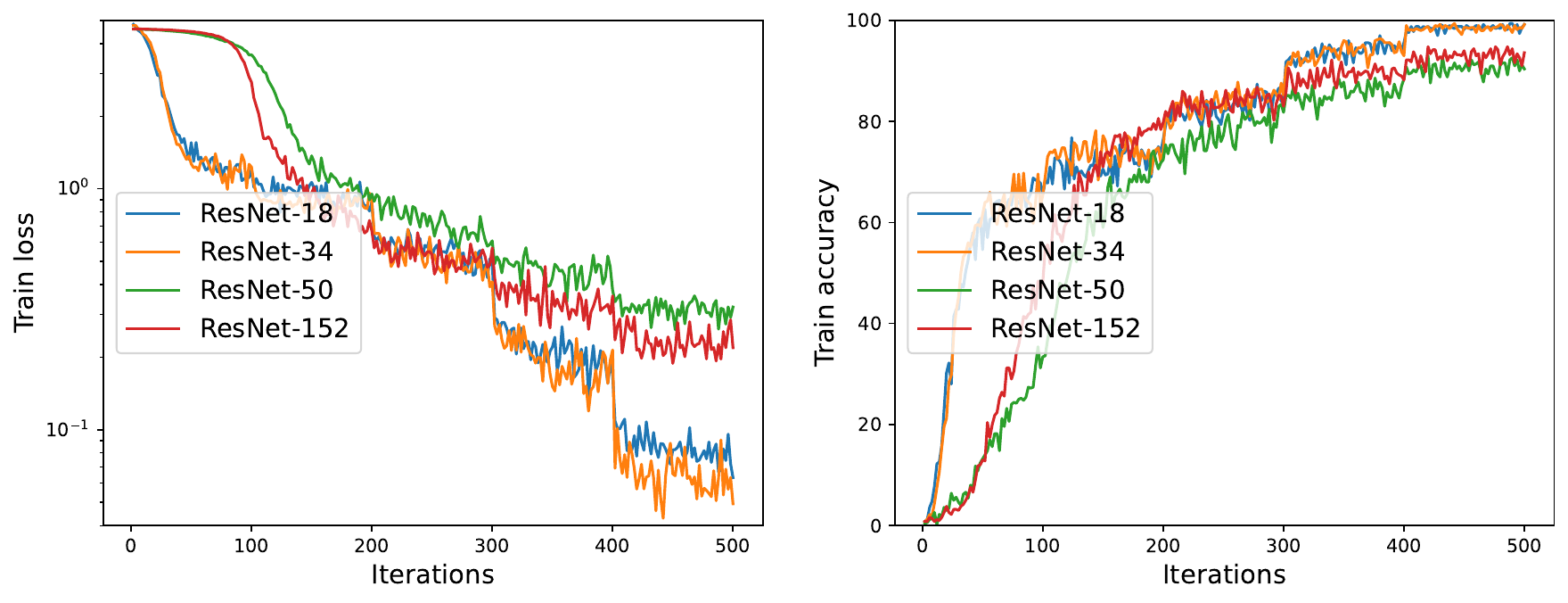}
\includegraphics[width=0.49\linewidth]{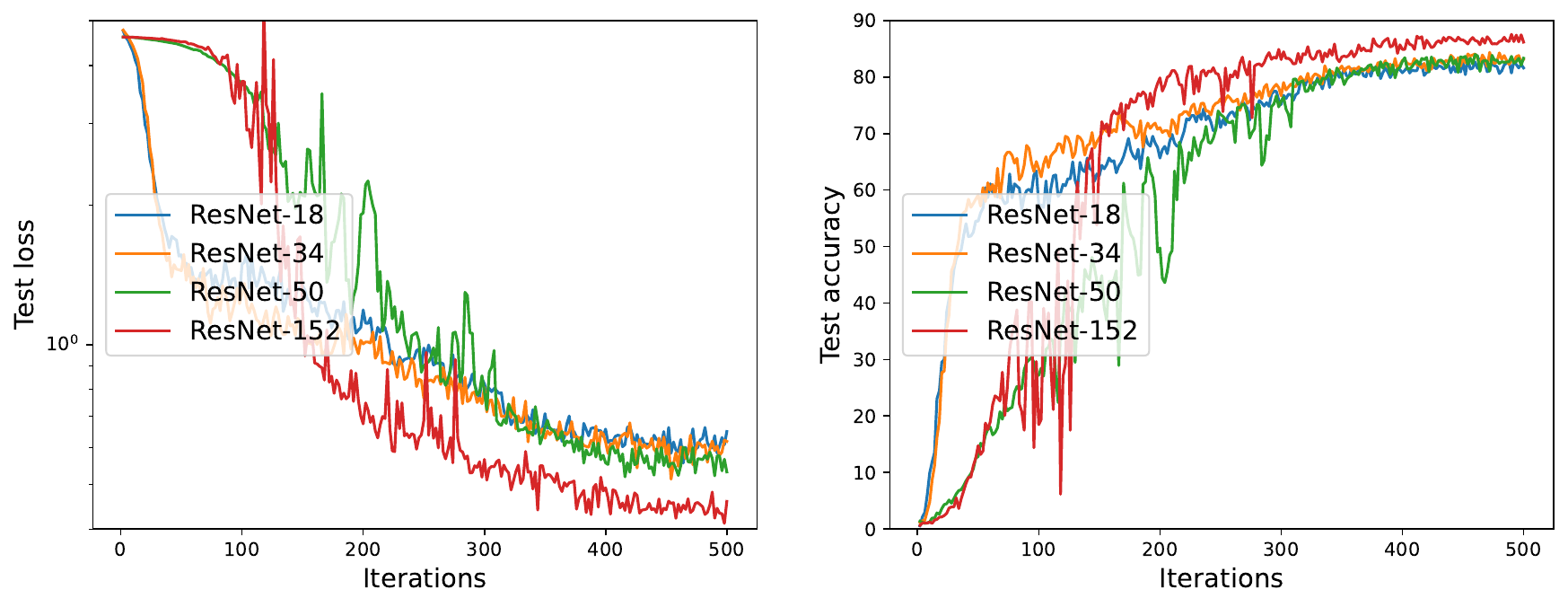}
\includegraphics[width=0.49\linewidth]{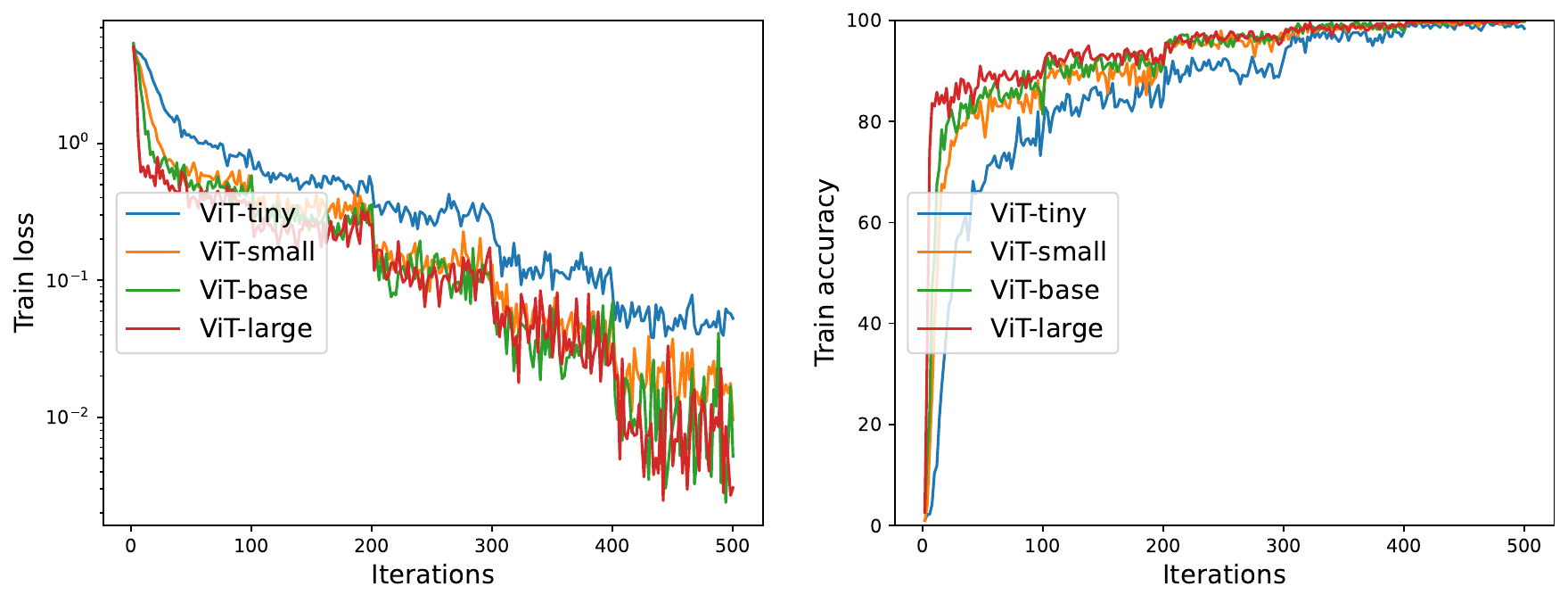}
\includegraphics[width=0.49\linewidth]{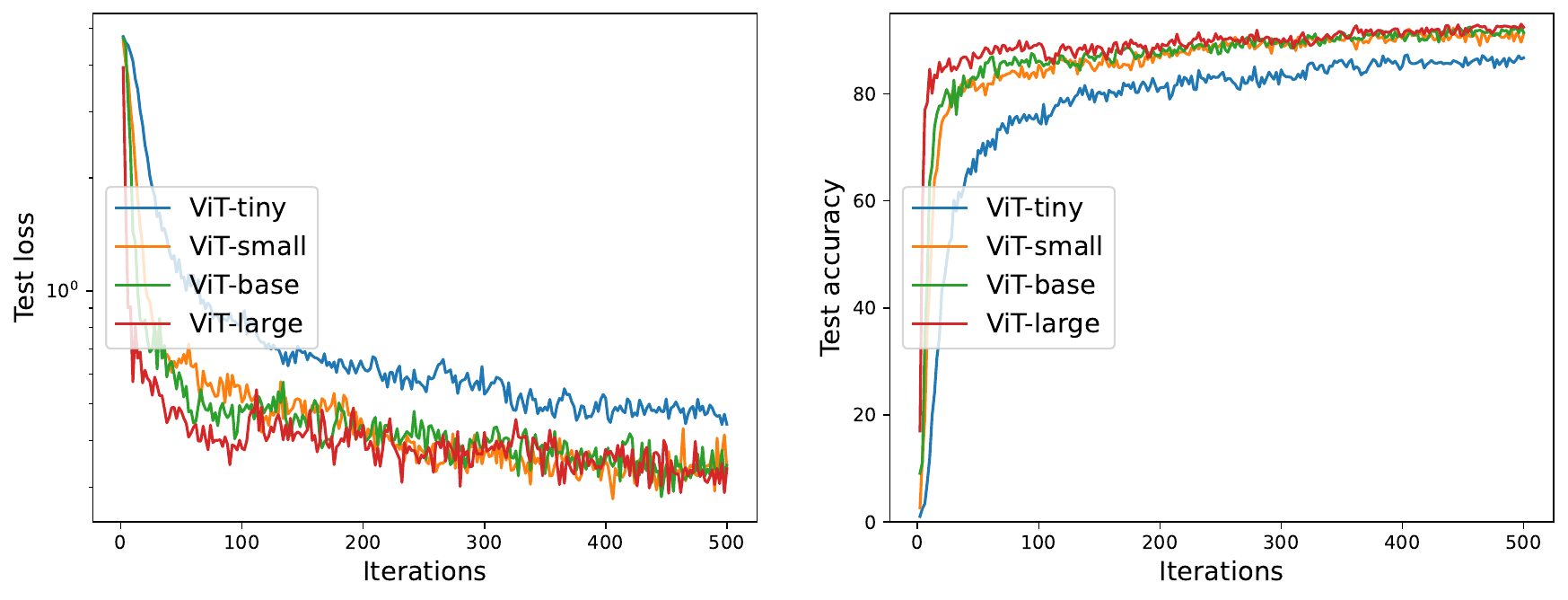}
    \caption{Convergence of GeN-SGD (upper panel) and GeN-AdamW (lower panel) on CIFAR10 with various model architectures and sizes.}
    \label{fig:different arch}
\end{figure}

For efficiency and system-wise scalability, we focus on the additional forward passes in \Cref{alg:autoSGD}, especially in the scenarios such as parameter-efficient training (PET) and distributed learning. Our default setting is full-parameter training (including mixed precision training), $\Phi=1$, and on single GPU (no communication cost among devices).

To set the stage, GeN optimizers have the same peak memory cost as the base optimizers, as all optimizers need forward passes and we adopt CPU off-loading. Hence it suffices to count the additional time complexity introduced by GeN. In the default setting, we can write
\begin{align*}
\text{absolute speed: }
\begin{cases}
\text{Base}&1/(\textcolor{red}{F}+\textcolor{blue}{B}+C)
\\
\text{GeN}&1/(\textcolor{red}{F}+\textcolor{blue}{B}+C+\frac{2}{\Phi}{\color{red} F})
\end{cases}
\quad 
\text{relative speed of GeN: }\frac{\textcolor{red}{F}+\textcolor{blue}{B}+C}{\textcolor{red}{F}+\textcolor{blue}{B}+C+\frac{2}{\Phi}{\color{red} F}}
\end{align*}
% \begin{align*}
% \text{absolute speed: }
% \begin{cases}
% \text{Base}&1/(\textcolor{red}{F}+\textcolor{blue}{B}+C)
% \\
% \text{GeN}&1/(\textcolor{red}{F}+\textcolor{blue}{B}+C+\frac{2}{\Phi}{\color{red} F})
% \\
% \text{Hi-DLR}&1/(\textcolor{red}{F}+\textcolor{blue}{B}+C+\frac{4K}{\Phi}{\color{red} F})
% \end{cases}
% \quad 
% \text{relative speed of Hi-DLR: }\frac{\textcolor{red}{F}+\textcolor{blue}{B}+C}{\textcolor{red}{F}+\textcolor{blue}{B}+C+\frac{4K}{\Phi}{\color{red} F}}
% \end{align*}
where $\textcolor{red}{F}$ is the time complexity of the forward pass, in terms of float-point operations, $\textcolor{blue}{B}$ is that of the back-propagation, and $C$ is other costs including the communication and the data loading, which are minimal on single GPU. Given that in full-parameter training, $\textcolor{blue}{B}\approx 2\textcolor{red}{F}$\footnote{Forward pass takes 1 unit of time. Back-propagation consists of two sub-processes -- output gradient and parameter gradient, each taking 1 unit of time. See the complexity analysis in Table 1 of \cite{bu2022scalable}.}, the GeN optimizers are roughly 60\% as fast as the base optimizers.

\subsection{Lazy learning rate update}
\label{sec:lazy lr}
One simple trick to make GeN almost as fast as base optimizers is to update the learning rate every $\Phi>1$ iterations, so the additional computation is amortized. For instance, GeN achieves $86\%$ relative speed at $\Phi=4$ and $>92\%$ relative speed at $\Phi\geq 8$. 

\begin{figure}[!htb]
    \centering
    \includegraphics[width=0.24\linewidth]{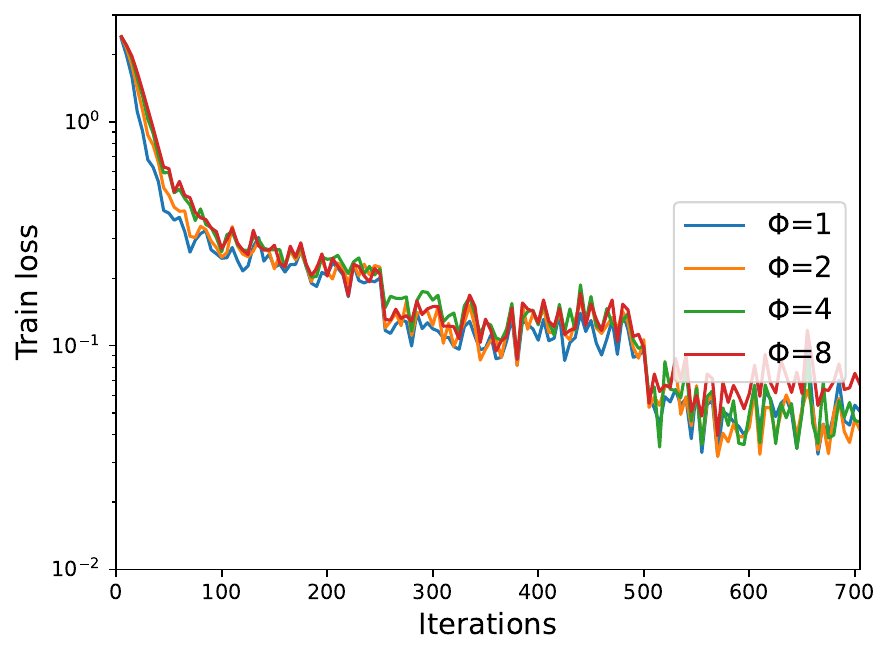}
\includegraphics[width=0.24\linewidth]{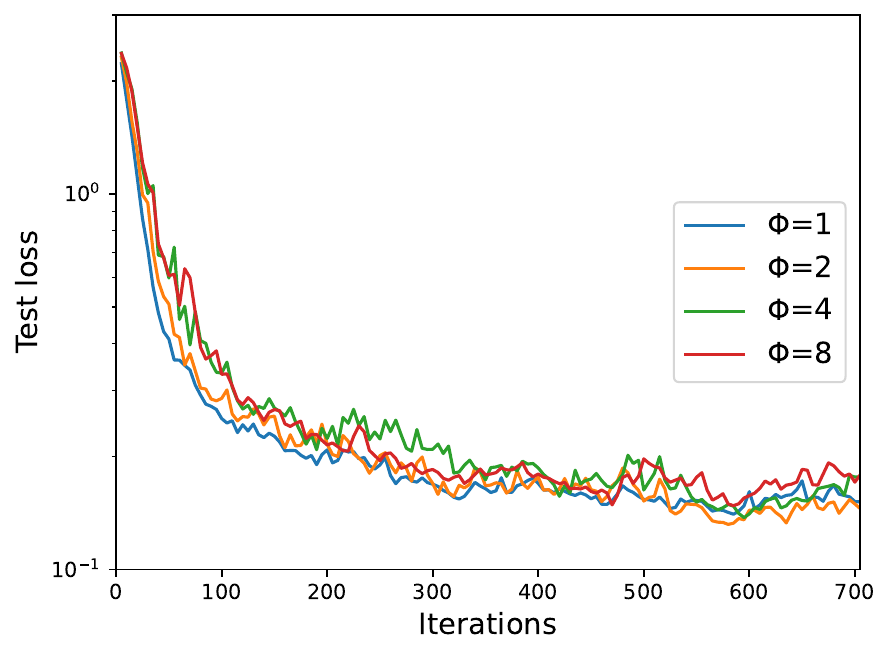}
\includegraphics[width=0.24\linewidth]{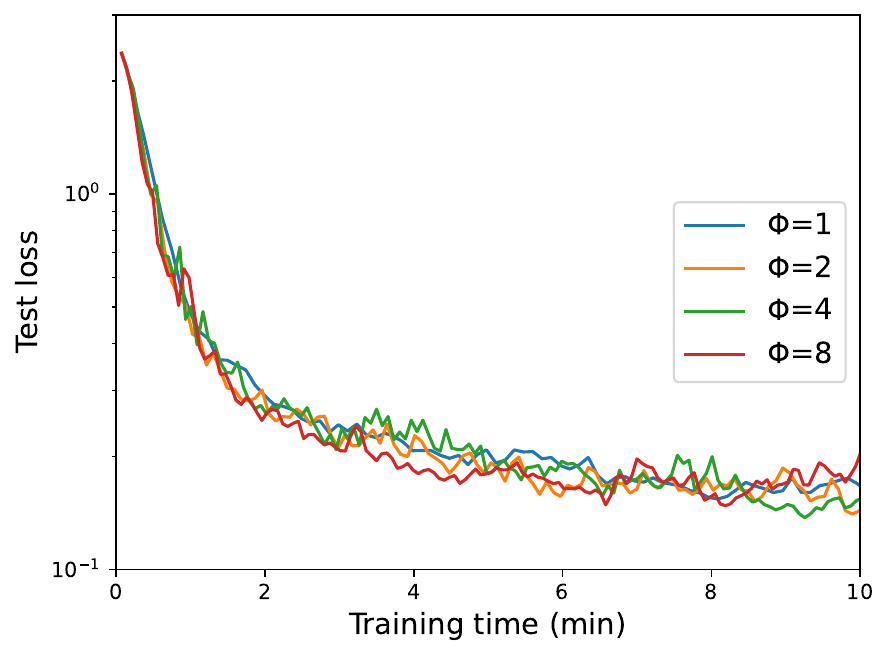}
\includegraphics[width=0.24\linewidth]{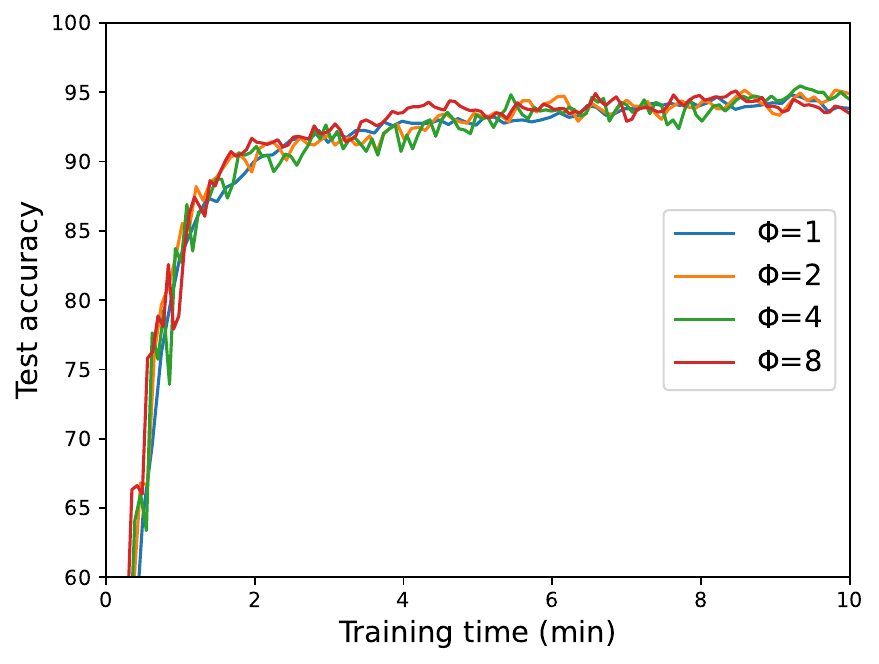}
    \caption{Convergence of ResNet18 on CIFAR10, optimized by GeN-SGD with various $\Phi$.}
    \label{fig:lazy lr}
\end{figure}

In the PET setting, most parameters are frozen whose gradients are not computed. Hence $\textcolor{blue}{B}\approx\textcolor{red}{F}$ and the relative speed of GeN becomes $\frac{1}{1+1/\Phi}$, e.g. 50\% at $\Phi=1$ and 89\% at $\Phi= 8$. 
% That is, GeN has slightly lower relative speed than the full-parameter training, but enjoys a higher absolute speed.
Empirically, applying $\Phi$ has insignificant effect on the convergence.

\subsection{Communication in distributed learning}
For large-scale optimization, the distributed learning with multiple devices is necessary but challenging, in that (1) the communication orchestra is complicated, and (2) the communication cost is significant ($C\gg 0$). These two challenges determine if and how well an optimizer scales under a distributed solution. 
For data-parallel solutions such as  Distributed Data Parallel and ZERO1/2 \citep{rajbhandari2020zero}, the communication orchestra is relatively simple so that many Hessian-related optimizers are executable. For model-parallel and pipeline-parallel solutions such as ZERO3 and FSDP, each forward pass requires the communication of model parameters, and GeN indeed adds a noticeable volume of communication. Nevertheless, applying the lazy learning rate can essentially reduce the communication overhead to a negligible level.

% \subsection{Fast forward techniques}
% Model inference techniques can be applied to accelerate the additional forward passes. For example, we can use quantization, low-bit precision, 
% \url{https://pytorch.org/serve/performance_checklist.html}, etc. Alternatively, we can use smaller logical batch size and/or larger physical batch size than the standard forward pass, because the additional forward passes are activation-free and hence memory-saving.

\section{Experiments on synthetic data}
\label{sec:synthetic}
To compare GeN and different optimizers deterministically in the synthetic setting,
% (without randomness from initialization and mini-batch sampling), 
we experiment on 2-dimensional functions and carefully select an optimal learning rate for each optimizer (see the details in \Cref{ref:exp-synthetic}).

\begin{figure}[!htb]
    \centering
% \vspace{-0.5cm}
\hspace{-0.5cm}
\includegraphics[width=.28\textwidth]{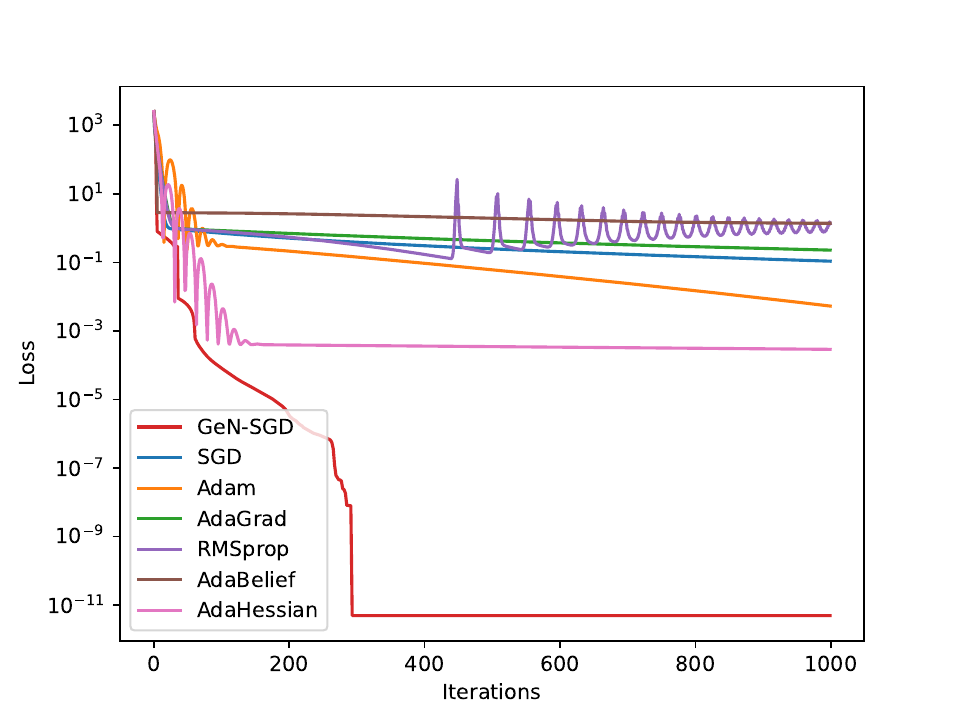}
\hspace{-0.5cm}
\includegraphics[width=.26\textwidth]{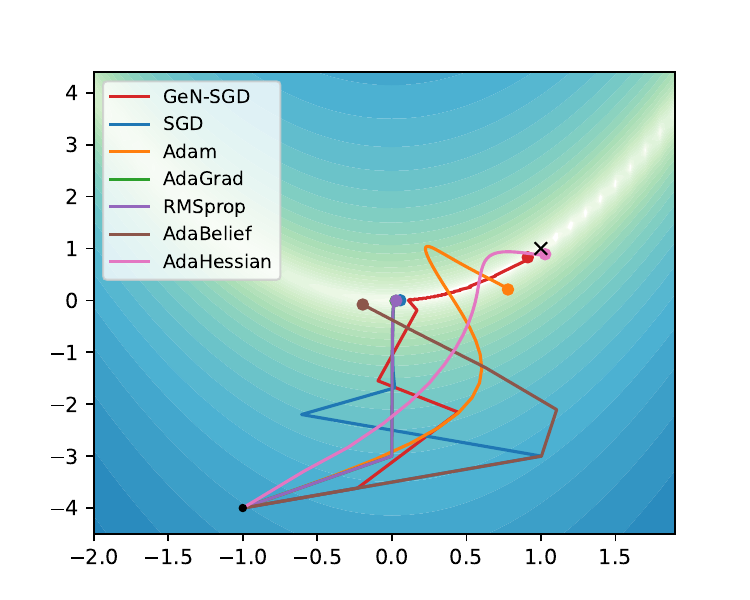}
\hspace{-0.4cm}
\includegraphics[width=.28\textwidth]{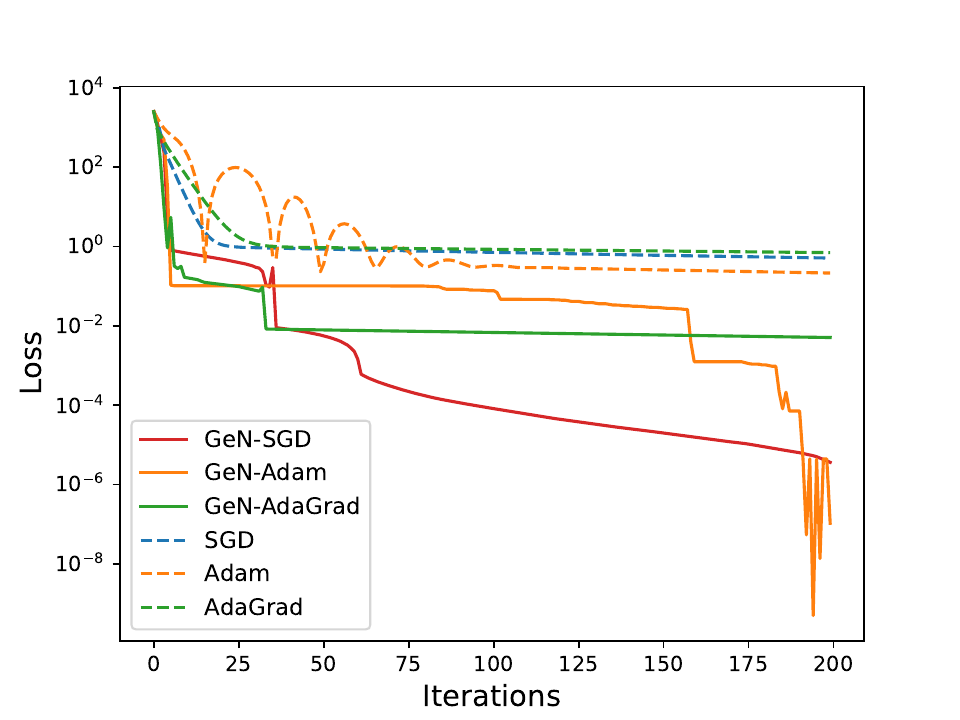}
\hspace{-0.5cm}
% \vspace{-0.3cm}
\includegraphics[width=.26\textwidth]{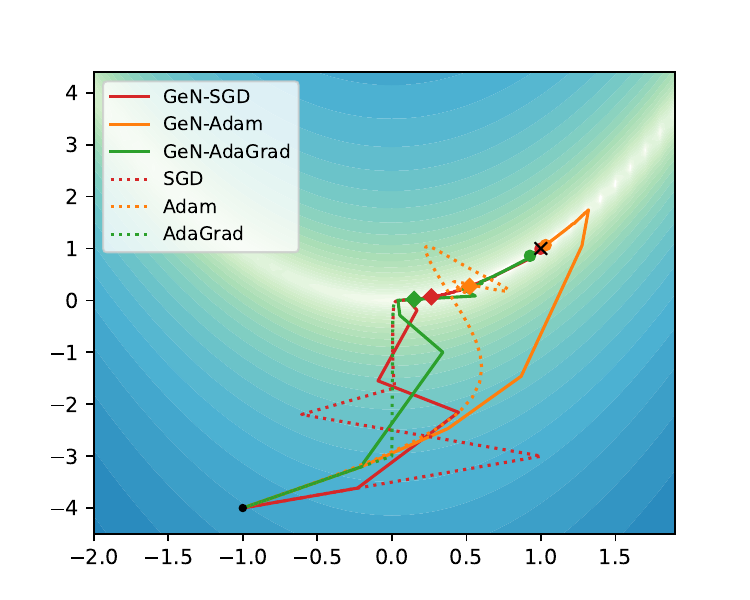}
\hspace{-0.5cm}
\vspace{-0.2cm}
\caption{Optimizing over the Rosenbrock (non-convex) function. Plots 1\&3: losses over iterations optimized by different optimizers. Plots 2\&4: 2D visualization of the optimization trajectories at 40-th iteration (Plot 2) and the 180-th iteration (Plot 4).}
    \label{fig:rosenbrock}
\end{figure}

In \Cref{fig:rosenbrock}, we test on the Rosenbrock function, which is non-convex and has a unique minimum at $(1,1)$ that lies in a narrow valley. The left two plots show the trajectory of our GeN-SGD and its convergence speed, which significantly outperforms other optimizers. The right two plots show the one-to-one comparison between the base optimizers (dashed curves) and their GeN variants (solid curves), from which the acceleration by GeN is observed universally. 

In \Cref{fig:beale}, we test on the Beale function, which is convex and has a unique minimum at $(3,0.5)$, and observe the same advantage of GeN.

\begin{figure}[!htb]
    \centering
% \vspace{-0.5cm}
\hspace{-0.5cm}
\includegraphics[width=.28\textwidth]{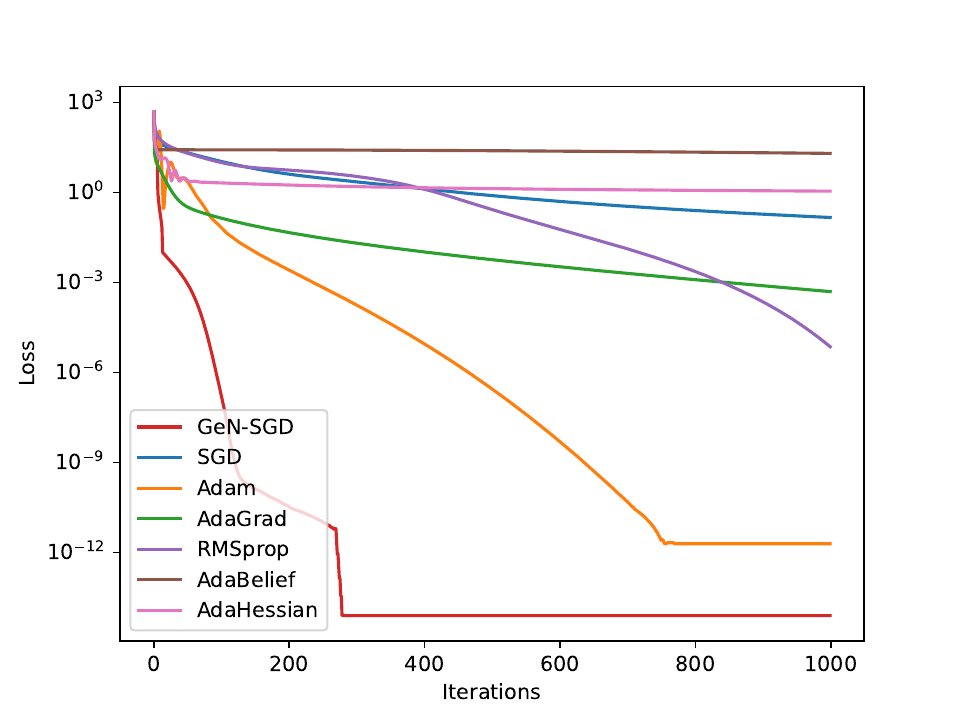}
\hspace{-0.5cm}
\includegraphics[width=.26\textwidth]{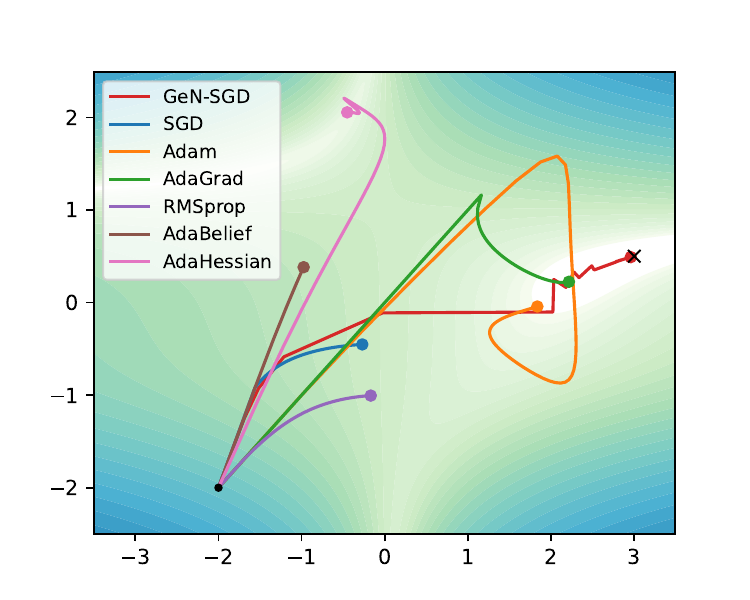}
\hspace{-0.4cm}
\includegraphics[width=.28\textwidth]{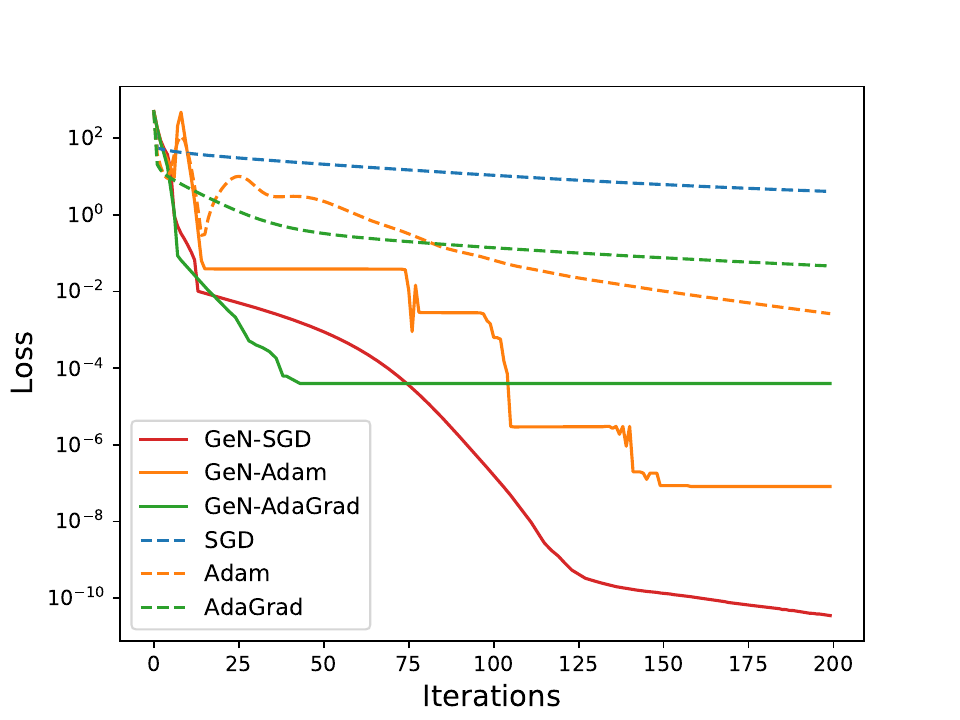}
\hspace{-0.5cm}
\includegraphics[width=.26\textwidth]{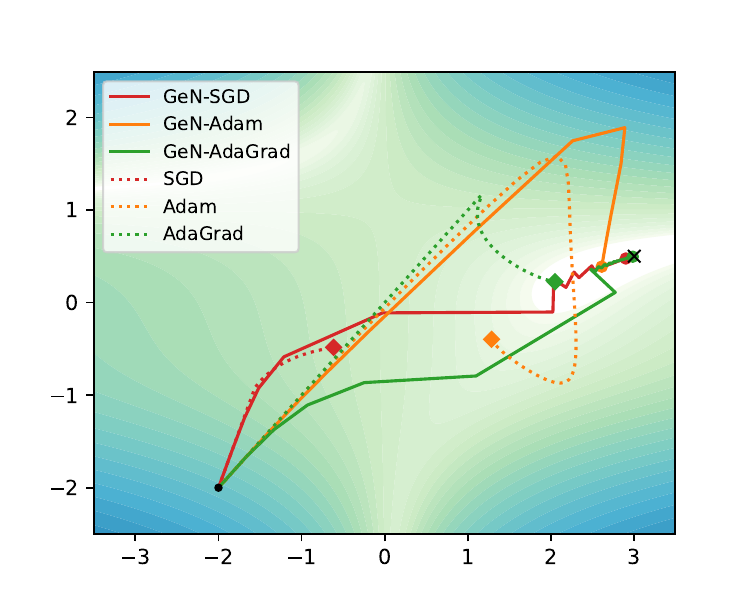}
\hspace{-0.5cm}
\vspace{-0.2cm}
\caption{Optimizing over the Beale (convex) function. Plots 1\&3: losses over iterations optimized by different optimizers. Plots 2\&4: 2D visualization of the optimization trajectories at 60-th iteration (Plot 2) and the 40-th iteration (Plot 4).}
    \label{fig:beale}
\end{figure}

\section{Experiments on real data}
In this section, we test GeN on real datasets across image classification, text classification, natural language generation, object detection and instance segmentation. We additionally experiment on image generation and recommendation system in \Cref{app:experiments}, where all training details can be found. Our experiments cover end-to-end training as well as fine-tuning, and include PET methods such as LoRA \citep{hu2021lora} and BitFit \citep{zaken2022bitfit}.

\subsection{Image classification}
\label{sec:cv}
We apply variants of SGD on ResNet50 and AdamW on ViT, by training on computer vision datasets with various sizes (e.g. Places365 has $\approx 2$ million images) and difficulty (ranging from 30\% to 98\% accuracy). In \Cref{tab:cv} and \Cref{fig:INat}, GeN optimizers consistently achieve high accuracy when compared with heuristic learning rate schedulers (i.e. constant, linear and cosine decay) as well as automatic optimizers like Prodigy and D-Adaptation\footnote{We have observed that D-Adaptation SGD diverges for all datasets when the weight decay of 5e-4 is used.}.
\begin{table}[!htb]
    \centering
        \caption{Test accuracy of ResNet (optimized by SGD) and ViT (optimized by AdamW) on various image classification datasets.}
\resizebox{\columnwidth}{!}{%
    \begin{tabular}{c|c|c|c|c|c|c|c|c}
        &dataset&CIFAR10 & CIFAR100  & Food101 &GTSRB& SVHN&Places365&INat2021\\\hline
        &reference&\multicolumn{2}{|c|}{\cite{krizhevsky2009learning}} & \cite{bossard2014food} &\cite{Houben-IJCNN-2013}& \cite{netzer2011reading}&\cite{zhou2014learning}&\cite{inat}\\\hline
        &epochs&5&5&5&5&5&5&10\\\hline
       \multirow{5}{*}{ResNet50}  & GeN-SGD &\textbf{96.76}&\textbf{84.77}&\textbf{82.43}&\textbf{97.96}&\textbf{97.02}&\textbf{54.24}&\textbf{44.57}\\
& SGD(constant)&95.91 &81.64
% (lr0.1>0.05>0.01)
&74.24&95.20&95.08&51.09&33.80
% Places 31.50[if wd]
\\
& SGD(linear decay)&95.52&81.67&75.14&95.33&95.34&50.95&30.69
% Places 36.45[if wd]
\\
& SGD(cosine decay)&95.82&80.71&76.39&95.20&95.27&51.15&31.10
% Places 38.74[if wd]
\\
& Prodigy&95.17&80.39&80.74&97.80&96.23&50.33&33.77\\
& D-adapt SGD&95.20&80.95&80.50&97.38&95.32&47.24&38.00 \\
\hline
\multirow{5}{*}{ViT-base}   &GeN-AdamW &98.68&92.62&\textbf{90.48}&\textbf{99.06}&97.14
&\textbf{59.80}&66.28\\
&AdamW(constant)&97.49&89.23&88.44&98.54&96.65&58.28&65.62\\
&AdamW(linear decay)&98.48&92.60&90.54&98.74&97.08&58.52&65.43\\
&AdamW(cosine decay)&98.73&\textbf{92.71}&90.46&98.77&\textbf{97.16}&58.19&\textbf{67.04}\\
& Prodigy&\textbf{98.92}&92.49&90.42&98.88&97.13&57.24&62.61\\
& D-adapt AdamW&97.56&88.11&89.45&99.04&96.77&56.19&66.52 \\
    \end{tabular}
    }

    \label{tab:cv}
\end{table}

\begin{figure}[!htb]
% \vspace{-0.5cm}
    \centering
    \includegraphics[width=0.25\linewidth,height=2.73cm]{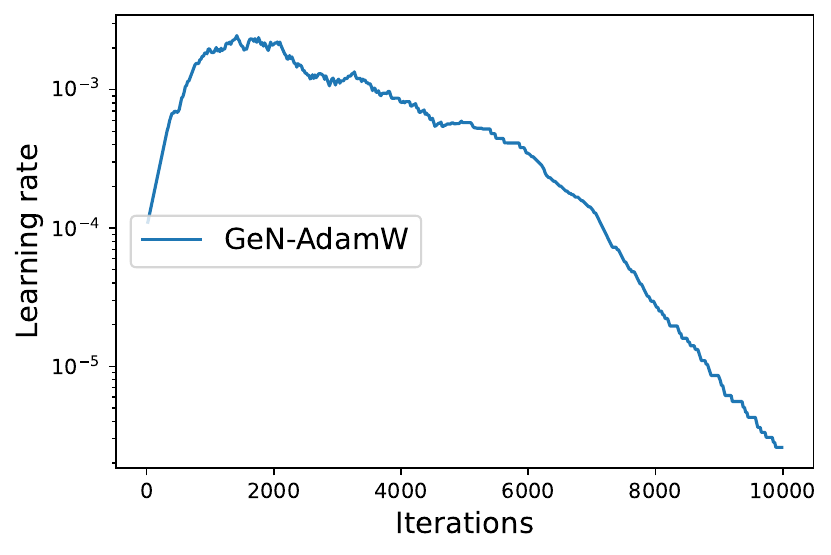}
    \includegraphics[width=0.74\linewidth]{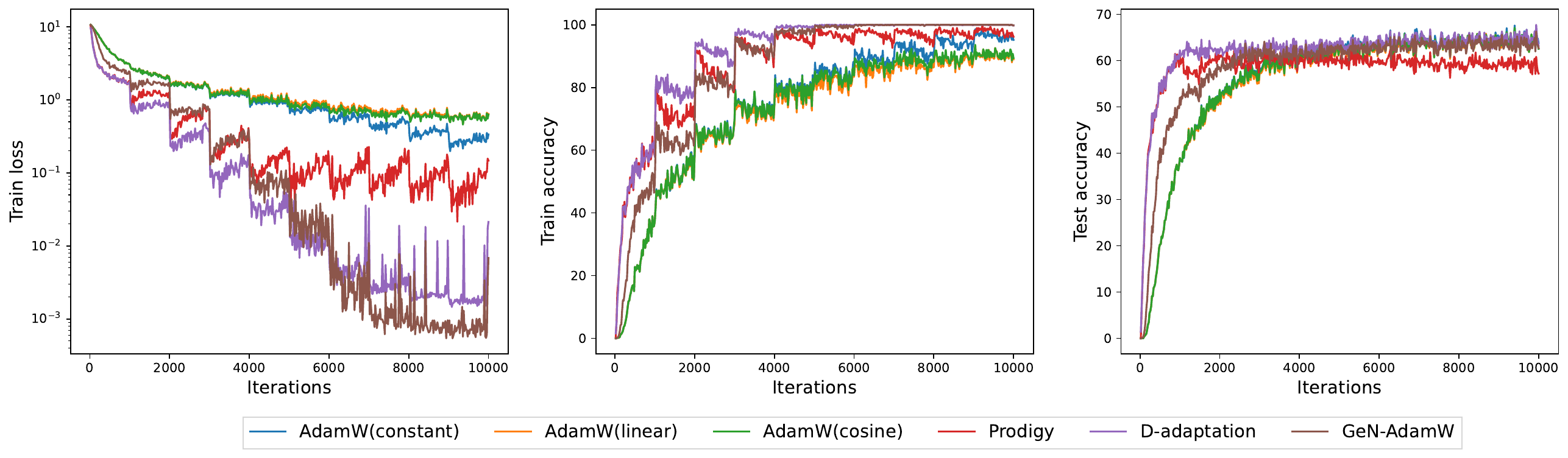}
    \caption{Convergence of ViT on INat2021 dataset, optimized by variants of AdamW.}
    \label{fig:INat}
\end{figure}

\subsection{Natural Language Processing}
We compare GeN-AdamW with a variety of learning rate schedulers on natural language generation (NLG) and natural language understanding (NLU).

For NLG experiments, we train GPT2 \citep{radford2019language} model with LoRA on the E2E dataset \citep{novikova2017e2e}. We measure the quality of the generated texts by the perplexity (the exponential of loss), BLEU \citep{papineni2002bleu}, ROUGE-L \citep{lin2004rouge}, METEOR \citep{banerjee2005meteor}, NIST \citep{doddington2002automatic}, and CIDEr \citep{vedantam2015cider}. We use \citep{hu2021lora} as the baseline, which trains for 5 epochs with linearly decaying learning rate. In \Cref{fig:lr compare} and \Cref{tab:gpt2 measure}, GeN is consistently performant and outperforms the baseline over multiple metrics. Somewhat surprisingly, the learning rate of GeN continues to go up even if we extend the training to 20 epochs in \Cref{fig:gpt2 lr}, which is not captured by previous heuristic learning rate schedulers.

\begin{minipage}{\linewidth}
  \begin{minipage}[p]{0.37\linewidth}
    \centering
\includegraphics[width=0.9\linewidth]{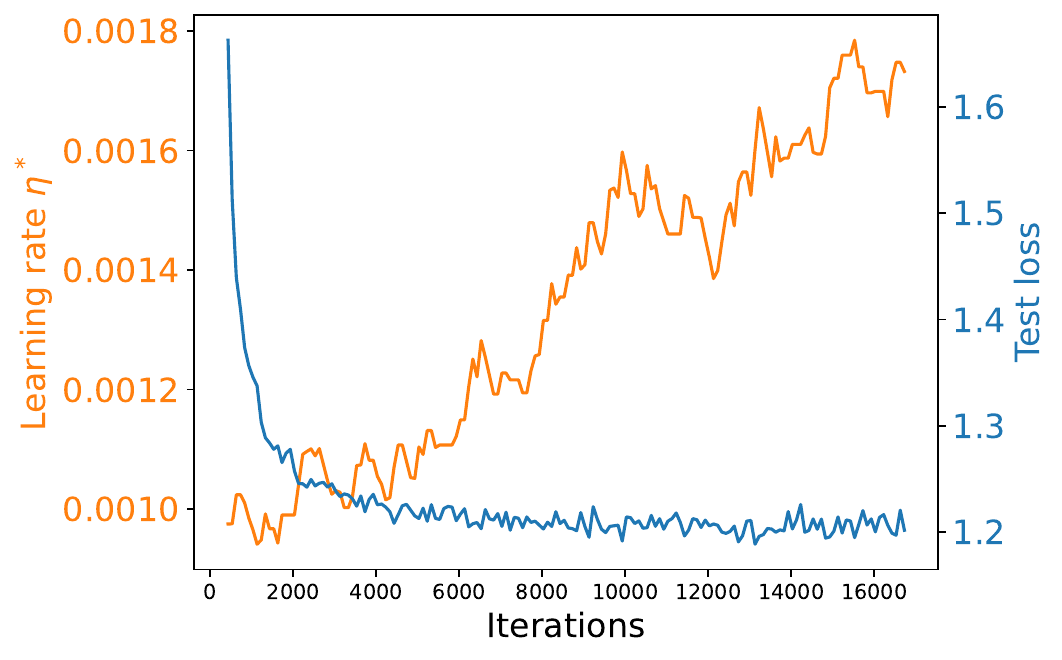}    
\vspace{-0.3cm}
\captionof{figure}{Loss and learning rate during GPT2 training.}
\label{fig:gpt2 lr}
  \end{minipage}
  \hfill
  \begin{minipage}[p]{0.58\linewidth}
    \centering
      \captionof{table}{Test performance of GPT2 on E2E dataset (higher the better).}
\setlength\tabcolsep{4pt}
\resizebox{\linewidth}{!}{
    \begin{tabular}{c|c|c|c|c|c}
     &BLEU&ROGUE&NIST&METEOR&CIDER  \\\hline
    GeN &\textbf{67.30}&67.09&\textbf{8.6508}&0.4407&\textbf{2.2810} \\
    linear decay &66.85&\textbf{67.78}&8.4438&0.4375&2.2561 \\
    cosine decay &66.59&67.35&8.4511&0.4290&2.2553 \\
    constant &65.95&66.69&8.3982&\textbf{0.4462}&2.1697 \\
    inverse sqrt$(1/\sqrt{t})$ &65.10&66.19&8.1274&0.4047&2.1001 \\
    inverse $(1/t)$ &64.43&65.95&8.0447&0.4010&2.0792 \\    
    \end{tabular}
    }

    \label{tab:gpt2 measure}
    \end{minipage}
\end{minipage}

% \subsection{Natural Language Understanding}
For NLU experiments, we evaluate RoBERTa-base \citep{liu2019roberta} on the GLUE \citep{wang2018glue} benchmark with LoRA, BitFit and full-parameter training (FT).

\begin{table}[!htp]
  \centering
    \caption{Test performance of RoBERTa model with different methods on the GLUE benchmark (higher the better). Blue numbers are results in published papers, produced by heuristic learning rate schedulers in \cite{hu2021lora} (linear warm-up and linear decay). Red numbers are results of GeN optimizers. We report the overall (matched and mismatched) accuracy for MNLI, Matthew’s correlation for CoLA, F1 score for MRPC and QQP, and accuracy for other tasks.} 
  \footnotesize
  % \addtolength{\tabcolsep}{-4pt}
  \begin{tabular}{l|r|ccccccc} 
  % \hline
  % \toprule
     &   \begin{tabular}[c]{@{}c@{}}Trainable\\param\end{tabular}& MNLI&SST-2 & MRPC & CoLA&QNLI&QQP&RTE\\
  \hline
  LoRA & 0.3M & 
  \textcolor{blue}{87.5}|\textcolor{red}{86.7}&
  \textcolor{blue}{95.1}|\textcolor{red}{94.3} & \textcolor{blue}{90.8}|\textbf{\textcolor{red}{92.1}} & 
  \textcolor{blue}{63.4}|\textbf{\textcolor{red}{63.7}} &
  \textcolor{blue}{93.3}|\textcolor{red}{92.3} &
  \textcolor{blue}{85.3}|\textbf{\textcolor{red}{86.9}} &
  \textcolor{blue}{86.6}|\textcolor{red}{79.1} \\
  BitFit & 0.1M & 
  \textcolor{blue}{85.0}|\textbf{\textcolor{red}{85.1}}&
  \textcolor{blue}{93.7}|\textbf{\textcolor{red}{94.3}} & \textcolor{blue}{92.0}|\textbf{\textcolor{red}{92.3}} & 
  \textcolor{blue}{61.8}|\textbf{\textcolor{red}{62.5}} &
  \textcolor{blue}{91.3}|\textbf{\textcolor{red}{91.5}} &
  \textcolor{blue}{84.2}|\textbf{\textcolor{red}{84.3}} &
  \textcolor{blue}{77.8}|\textbf{\textcolor{red}{78.0}} \\
  FT & 125.0M & 
  \textcolor{blue}{86.7}|\textcolor{red}{86.8} &
  \textcolor{blue}{94.2}|\textbf{\textcolor{red}{94.7}} & \textcolor{blue}{92.5}|\textcolor{red}{92.3}& 
  \textcolor{blue}{61.1}|\textbf{\textcolor{red}{64.8}} &
  \textcolor{blue}{92.3}|\textcolor{red}{91.9} &
  \textcolor{blue}{88.0}|\textbf{\textcolor{red}{88.4}} &
  \textcolor{blue}{77.4}|\textbf{\textcolor{red}{80.5}} \\
  \end{tabular}

  \label{tab:NLU_results}
\end{table}

\subsection{Object detection \& Instance Segmentation}
We train a Mask R-CNN \citep{he2017mask} with SGD on the Penn-Fudan dataset \citep{wang2007object}, following the \href{https://pytorch.org/tutorials/intermediate/torchvision_tutorial.html}{official Pytorch tutorial} which uses a linearly warm-up then constant learning rate. The loss $L(\w)$ is the sum of 5 losses from the classification and the localization in different model components. To be specific, the losses are classifier loss, bounding box regression loss in object detection, mask loss, objectness loss, and Region Proposal Network (RPN) bounding box regression loss. The model is built on ResNet50 and pre-trained on the COCO dataset \citep{lin2014microsoft}. We measure the precision in \Cref{tab:precision},
% and the recall
based on the Intersection over Union (IoU).

\begin{minipage}{\linewidth}

\begin{minipage}[p]{0.4\linewidth}
\centering
\includegraphics[width=0.85\linewidth]{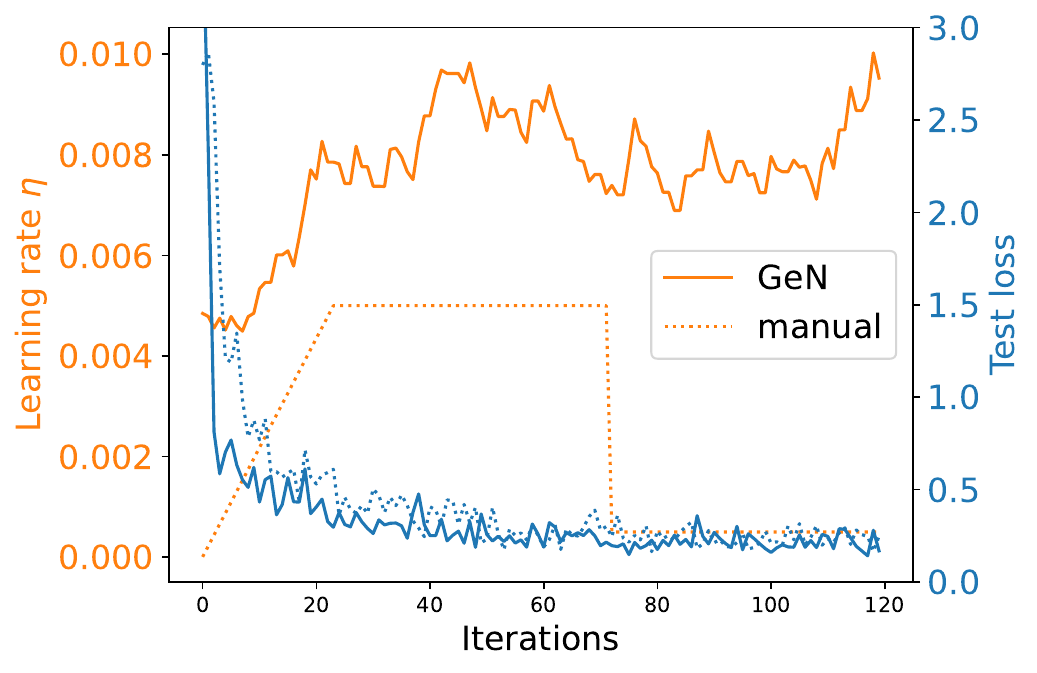}    
\vspace{-0.3cm}
\captionof{figure}{Loss and learning rate during Mask R-CNN training.}
\end{minipage}
  \hfill
\begin{minipage}[p]{0.55\linewidth}
\centering
% \vspace{-2cm}
  \captionof{table}{Average precision of Mask R-CNN (higher the better). AP\_s/m/l means small/medium/large instances, and AP is the average over all instances.}
\setlength\tabcolsep{2pt}
\resizebox{\linewidth}{!}{
    \begin{tabular}{c||c|c|c|c}
\hline
&\multicolumn{4} {|c} {Object detection}
\\\hline
&AP&AP$_\text{s}$&AP$_\text{m}$&AP$_\text{l}$
 \\\hline
GeN &$0.805_{\pm0.038}$&$0.452_{\pm0.002}$&$0.624_{\pm0.082}$&$0.813_{\pm0.032}$\\
manual &$0.802_{\pm0.025}$&$0.368_{\pm0.013}$&$0.586_{\pm0.079}$&$0.819_{\pm0.017}$\\\hline
&\multicolumn{4} {|c} {Instance segmentation}\\\hline
GeN &$0.771_{\pm0.019}$&$0.432_{\pm0.010}$&$0.518\pm_{0.104}$&$0.781_{\pm0.015}$
\\
manual&$0.768_{\pm0.022}$&$0.388_{\pm0.029}$&$0.455_{\pm0.092}$&$0.783_{\pm0.020}$ \\
    \end{tabular}
    }

    \label{tab:precision}
    \end{minipage}
\end{minipage}

\section{Discussion}
In this work, we propose the GeN optimizers as a generally applicable and efficient approach towards the second-order optimization that is automatic and performant. Specifically, GeN is applicable to any future pre-conditioner that improves the convergence. However, additional work is needed to further our understanding of GeN. We remark that the learning rate in GeN is locally optimal but less is known about the global convergence speed. Looking forward, we expect further combination of more optimization algorithms and GeN, as well as exploring GeN on various domains and problems.

\newpage
\bibliography{ref}
\bibliographystyle{iclr2025}
\newpage

\appendix
\section{Preliminaries and Proofs}
\subsection{Finite difference method}
The finite difference method can be used to approximate high-order derivatives of a scalar $f(w)$. We denote $\Delta$ as one unit of difference.

We start with the first-order finite difference for $\frac{df}{dw}$ and two points. There are three popular methods: the forward, the backward, and the central difference:
\begin{align*}
\begin{cases}
  \text{Forward}&\frac{f(w+\Delta)-f(w)}{\Delta}
  \\
  \text{Backward}&\frac{f(w)-f(w-\Delta)}{\Delta}
\\
  \text{Central}&\frac{f(w+\Delta)-f(w-\Delta)}{2\Delta}
\end{cases}    
\end{align*}
The precision of forward and backward difference is $O(\Delta)$, and that of central difference is $O(\Delta^2)$.

We can approximate higher-order derivatives, such as the second-order $\frac{d^2 f}{dw^2}$, with more points (say three).
\begin{align*}
\begin{cases}
  \text{Forward}&\frac{f(w+2\Delta)-2f(w+\Delta)+f(w)}{\Delta^2}
  \\
  \text{Backward}&\frac{f(w)-2f(w-\Delta)+f(w-2\Delta)}{\Delta^2}
\\
  \text{Central}&\frac{f(w+\Delta)-2f(w)+f(w-\Delta)}{\Delta^2}
\end{cases}    
\end{align*}
The precision of forward and backward difference is $O(\Delta)$, and that of central difference is $O(\Delta^2)$. Generally speaking, we can use $(k+1)$ points to approximate the $k$-th order derivatives at $O(\Delta^2)$ with the central difference.

The precision can be improved with more points for the same derivative, say for $\frac{df}{dw}$,
\begin{align*}
\begin{cases}
  \text{Forward}&  \frac{-\frac{1}{2}f(w+2\Delta)+2f(w+\Delta)-\frac{3}{2}f(w)}{\Delta}
  \\
  \text{Backward}&\frac{\frac{3}{2}f(w)-2f(w-\Delta)+\frac{1}{2}f(w-2\Delta)}{\Delta}
\\
  \text{Central}&\frac{-\frac{1}{12}f(w+2\Delta)+\frac{2}{3}f(w+\Delta)-\frac{2}{3}f(w-\Delta)+\frac{1}{2}\frac{2}{3}f(w-2\Delta)}{\Delta}
\end{cases}    
\end{align*}
Compared with the two-point difference, using three points improves the precision of forward difference from $O(\Delta)$ to $O(\Delta^2)$; using four points improves the precision of central difference from $O(\Delta^2)$ to $O(\Delta^4)$.

\subsection{Generalized inverse}

\begin{definition}
\label{def:generalized inv}
Any matrix $\A\in\R^{m\times n}$ has at least one \textit{generalized inverse} $\A_G^{-1}\in\R^{n\times m}$ such that $\A\A_G^{-1}\A=\A$. The generalized \textit{right inverse} and \textit{left inverse} are sub-cases of the generalized inverses: if $\A_R^{-1}$ satisfies $\A\A_R^{-1}=\I_m$ and rank$(\A)=m$, then $\A_R^{-1}$ is the right inverse of $\A$; similarly, if $\A_L^{-1}\A=\I_n$  and rank$(\A)=n$, then $\A_L^{-1}$ is the left inverse of $\A$.
\end{definition}

\subsection{Equivalence between finite difference and polynomial fitting}
\label{proof:FD and fit}
\begin{proof}
Consider fitting the quadratic function:
\begin{align*}
L_+&:=L_0+\eta\G^\top\g+\frac{\eta^2}{2}\g^\top\H\g
\\
L_-&:=L_0-\eta\G^\top\g+\frac{\eta^2}{2}\g^\top\H\g
\end{align*}
Then $\G^\top\g=\frac{L_+-L_-}{2\eta}, \g^\top\H\g=\frac{L_+-2L_0+L_-}{\eta^2}$, which is the equivalent to applying the second-order central finite difference method for both terms.

Hence, using the symmetric learning rates $(0,\eta,-\eta)$, the optimal learning rate is estimated as
$\eta^*=\frac{\eta}{2}\frac{L_+-L_-}{L_+-2L_0+L_-}$ with $O(\eta^2)$ error.

If we instead fit another quadratic function using $(0,\eta,2\eta)$, then
\begin{align*}
L_{+2}&:=L_0+2\eta\G^\top\g+2\eta^2\g^\top\H\g
\\
L_+&:=L_0+\eta\G^\top\g+\frac{\eta^2}{2}\g^\top\H\g
\end{align*}
Then $\G^\top\g=\frac{L_{+2}-4L_+ +3L_0}{-2\eta}, \g^\top\H\g=\frac{L_{+2}-2L_+ +L_0}{\eta^2}$, which is the equivalent to applying the second-order forward difference to $\G^\top\g$ and the first-order forward difference to $\g^\top\H\g$. 

Hence, using the non-symmetric learning rates $(0,\eta,2\eta)$, the optimal learning rate is estimated as $\frac{\eta}{2}\frac{4L_+-L_{+2} -3L_0}{L_{+2}-2L_+ +L_0}$ with $O(\eta)$ error.
\end{proof}

\subsection{Estimation error of $\eta^*$}
\begin{proof}[Proof of \Cref{prop:opb error}]
We omit the subscript $t$ for simplicity of presentation. Assume samples $\x_i$ are independently and identically distributed, then by central limit theorem,
$$\bar L_\pm=\frac{1}{B}\sum_i L(\w\pm\eta\g,\x_i)=L_\pm+\opb.$$
% Notice that the error term is only defined for the loss, which is a scalar regardless of the number of model parameters.
Hence the mini-batch approximation
$$\frac{\eta}{2}\frac{\bar L_+-\bar L_-}{\bar L_+-2\bar L_0+\bar L_-}=\frac{\eta}{2}\frac{L_+-L_-+\opb}{L_+-2L_0+L_-+\opb}=\frac{\eta}{2}\frac{L_+-L_-}{L_+-2L_0+L_-}+\opb$$
Next, the finite difference method has a precision of $O(\eta^2)$. Hence, the overall estimation error of $\eta^*$ is $\opb+O(\eta^2)$.
\end{proof}

\subsection{Derivation of $\eta^*$}
\label{app:derive eta}
We give the derivation of \eqref{eq:eta star}, given $\{-\eta_{t-1},0,\eta_{t-1}\}$ and $\{L_-,L_0,L_+\}$. We aim to minimize \eqref{eq:lr parabola}: ignoring the superscript in $\g_t^\text{optim}$,
$$\min_\eta L(\w_t-\eta \g_t) \text{ or equivalently } \min_\eta L(\w_t-\eta \g_t)-L(\w_t).$$
By second-order Taylor expansion, we get
$$\min_\eta L(\w_t-\eta \g_t)-L(\w_t)\approx\min_\eta \frac{\eta^2}{2}\g_t^\top\H_t\g_t-\eta\G_t^\top\g_t$$
which requires the knowledge of two coefficients $\g_t^\top\H_t\g_t$ and $\G_t^\top\g_t$.

Using curve fitting, we estimate the coefficients $\g_t^\top\H_t\g_t, \G_t^\top\g_t\approx A^*,b^*$ by
\begin{align}
A^*,b^*=\underset{A,b}{\text{argmin}} \sum_{\eta\in\{-\eta_{t-1},0,\eta_{t-1}\}}\left|(L(\w_t-\eta \g_t)-L(\w_t))-(A\frac{\eta^2}{2}-b\eta)\right|^2    
\end{align}
The same result can be derived with finite difference as in \eqref{eq:eta star} (though not through an optimization problem), through the numerical analysis as demonstrated in \Cref{proof:FD and fit}.

Finally, we can write $\eta^*=b^*/A^*\approx \G_t^\top\g_t/\g_t^\top\H_t\g_t$, where the error is controlled to small values by \Cref{prop:opb error}.

\textit{As an important extension, we can use more points} (say $\{-2\eta_{t-1},-\eta_{t-1},0,\eta_{t-1},2\eta_{t-1}\}$) to derive $\eta^*$. This usually gives a slightly more stable convergence, almost the same accuracy, but more computational overhead.

Using curve fitting, we easily extend to
$$\g_t^\top\H_t\g_t, \G_t^\top\g_t\approx \underset{A,b}{\text{argmin}} \sum_{\eta\in\{-2\eta_{t-1},-\eta_{t-1},0,\eta_{t-1},2\eta_{t-1}\}}\left|(L(\w_t-\eta \g_t)-L(\w_t))-(A\frac{\eta^2}{2}-b\eta)\right|^2$$

Using finite difference, the derivation is more complicated than \eqref{eq:eta star},
\begin{align}
\eta^*\approx \eta\cdot\frac{-\frac{1}{12}L_{+2}+\frac{2}{3}L_+-\frac{2}{3}L_-+\frac{1}{12}L_{-2}}{-\frac{1}{12}L_{+2}+\frac{4}{3}L_+-\frac{5}{2}L_0+\frac{4}{3}L_--\frac{1}{12}L_{-2}}
\label{eq:fd5}
\end{align}
where $L_{\pm 2}=L(\w_t\pm 2\eta_{t-1}\g_t)$. We refer to \url{https://en.wikipedia.org/wiki/Finite_difference_coefficient} for the table of finite difference coefficients.

\section{More on experiments}
\label{app:experiments}
\subsection{Synthetic data}\label{ref:exp-synthetic}
In \Cref{sec:synthetic}, we carefully select the best learning rate for each optimizer, through a exponential grid search from $\{10^{-k},2*10^{-k},5*10^{-k}\}$ for $k=5,4,...,0$. For each optimizer, we train with each learning rate for 1000 iterations and choose the one with smallest objective value. 

% For Rosenbrock, with starting point [-1, -4]. For Beale, starting at [-2, -2]. 

The Rosenbrock function, also known to as the Valley or Banana function is a classic test problem. We take its two-dimensional case:
\begin{align*}
    f(x_1,x_2)=100(x_2-x_1^2)^2+(1-x_1)^2.
\end{align*}
The Beale is a convex function with the following expression:
\begin{align*}
    f(x_1,x_2)=(1.5-x_1+x_1x_2)^2+(2.25-x_1+x_1x_2^2)^2+(2.625-x_1+x_1x_2^3)^2.
\end{align*}

\subsection{Image Classification}
For image classification problems, we use models that are pre-trained on ImageNet and can be loaded from \texttt{torchvision} and \texttt{timm} libraries. We resize all images to 224x224 and normalize the pixel values to [-1,1]. In what follows, all ResNets are trained with SGD with 0.9 momentum and 5e-4 weight decay, and ViT is trained with AdamW using default hyperparameters in Pytorch.

\subsubsection{CIFAR}
CIFAR10 and CIFAR100 are standard tiny image datasets that we have used as the test-bed. Our default hyperparameters for \Cref{fig:lr compare}, \Cref{fig:measure vs lr}, \Cref{fig:different B}, \Cref{fig:different arch}, \Cref{fig:lazy lr} are: $B=500$, $\Phi=4$, SGD learning rate=1e-2, AdamW learning rate=1e-4, unless one of the hyperparameters are varied for the ablation study.

\subsubsection{\Cref{sec:cv}}

We use $B=500$, SGD learning rate=0.1 and AdamW learning rate=1e-4 for all datasets. Notice that D-adapt SGD diverges on all datasets so we have to remove the 5e-4 weight decay for this optimizer only. For Places365 and INat2021, we use $\Phi=20$; for other datasets, because they are much smaller in sample size, we use $\Phi=4$.

\subsection{Natural Language Processing}
All transformers (RoBERTa, GPT2) are trained with AdamW using default hyperparameters in Pytorch.
\subsubsection{NLG}
In \Cref{fig:lr compare}, \Cref{fig:measure vs lr}, \Cref{fig:gpt2 lr} and \Cref{tab:gpt2 measure}, we follow the codebase of \cite{hu2021lora} and use $B=256$, sequence length 128, $\eta_0=1e^{-3}$, and 5 epochs. While applying, we set $\Phi=4$.

\subsubsection{NLU}\label{ref:exp-NLU}
The results of full-parameter training and BitFit training are from Table 2 of \cite{zaken2022bitfit} and those of LoRA are from Table 2 of \cite{hu2021lora}. However, since LoRA didn't report the F1 score of MRPC and QQP, we run LoRA under our settings (i.e., the same batch size and number of epochs) for a fair comparison. The table below records our hyper-parameters for training with GeN.

\begin{table}[!htb]
\begin{tabular}{ccccc}
\hline
     & Batch size & \begin{tabular}[c]{@{}c@{}}Initial learning rate \\ for FT\end{tabular} & \# of epochs & Eval metrics                  \\ \hline
MRPC & 128        & 2e-5                                                                    & 10           & F1                            \\ \hline
SST2 & 128        & 1e-6                                                                    & 10           & acc.                          \\ \hline
MNLI & 128        & 1e-6                                                                    & 5 (1 for FT) & matched acc.\&mismatched acc. \\ \hline
CoLA & 128        & 2e-5                                                                    & 10           & Matthews corr.                \\ \hline
QNLI & 128        & 2e-5                                                                        & 10           & acc.                          \\
\hline
QQP & 256        & 2e-5                                                                    & 5           & F1                            \\ 
\hline
RTE  & 128        & 2e-5                                                                    & 60           & acc.                          \\ \hline
\end{tabular}
\caption{Hyper-parameters and evaluation metrics for training GLUE datasets with GeN.}
\label{table:nluhyperparameter}
\end{table}

% Following \cite{he2020deberta, hu2021lora},
% we initialize the LoRA modules to our best MNLI checkpoint when adapting to MRPC and RTE, instead of the usual initialization; the pre-trained model stays frozen for all tasks. 

% By default, the max sequence length is set to be 512. The initial learning rate for LoRA and BitFit and the LoRA's related hyperparameters follow Table 9 of \cite{hu2021lora}. The optimizer is AdamW with weight decay as 0.1.
While applying GeN, we set $\Phi=8$ for all tasks. For hyperparameters that were not mentioned in \Cref{table:nluhyperparameter}, we followed Table 9 of \cite{hu2021lora}.

\subsection{Image generation}
We apply GeN-Adam 
% $(\Phi=10)$
to pre-train a deep convolutional generative adversarial network (DCGAN) \cite{radford2015unsupervised}, following the \href{https://pytorch.org/tutorials/beginner/dcgan_faces_tutorial.html}{official Pytorch tutorial}, which uses a constant learning rate, with a batch size 128 and training for 5 epochs. The training uses CelebA \cite{liu2015faceattributes}, a real dataset of $>200000$ face images.
\begin{figure}[!htb]
    \centering
\includegraphics[width=0.34\linewidth]{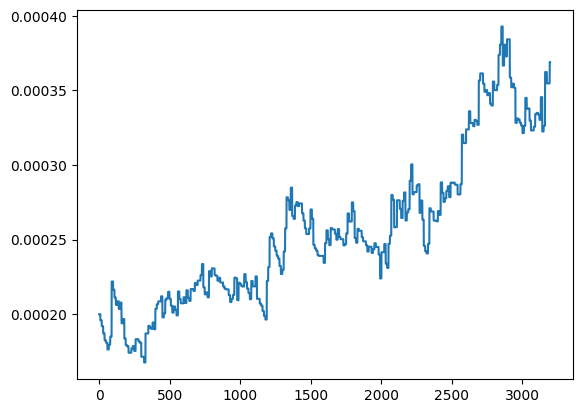}
\includegraphics[width=0.45\linewidth]{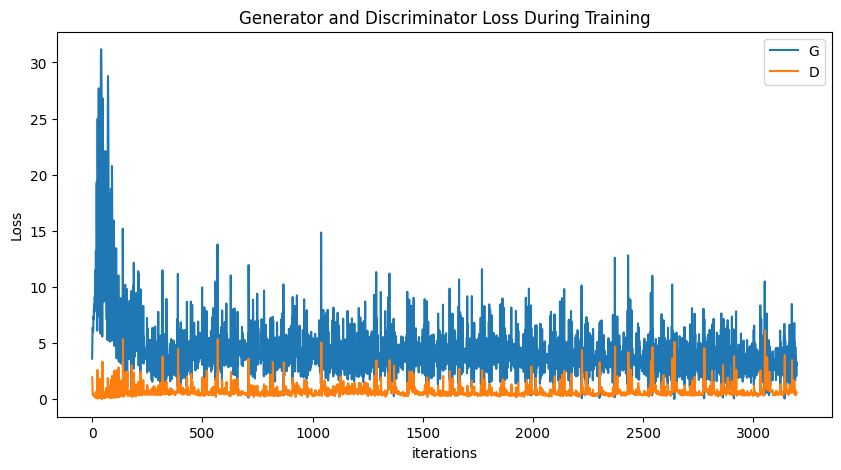}
    \caption{Left: learning rate of GeN-Adam over the iterations. Right: losses of the components in DCGAN, where D is the discriminator and G is the generator.}
\end{figure}
We qualitatively compare the fake generated images with the real ones, which demonstrates the effectiveness of our optimization that can be further improved with longer training. We believe this showcases the potential applicability of GeN to the vision generation, e.g. audio/music generation and diffusion models.
\begin{figure}[!htb]
    \centering
\includegraphics[width=0.84\linewidth]{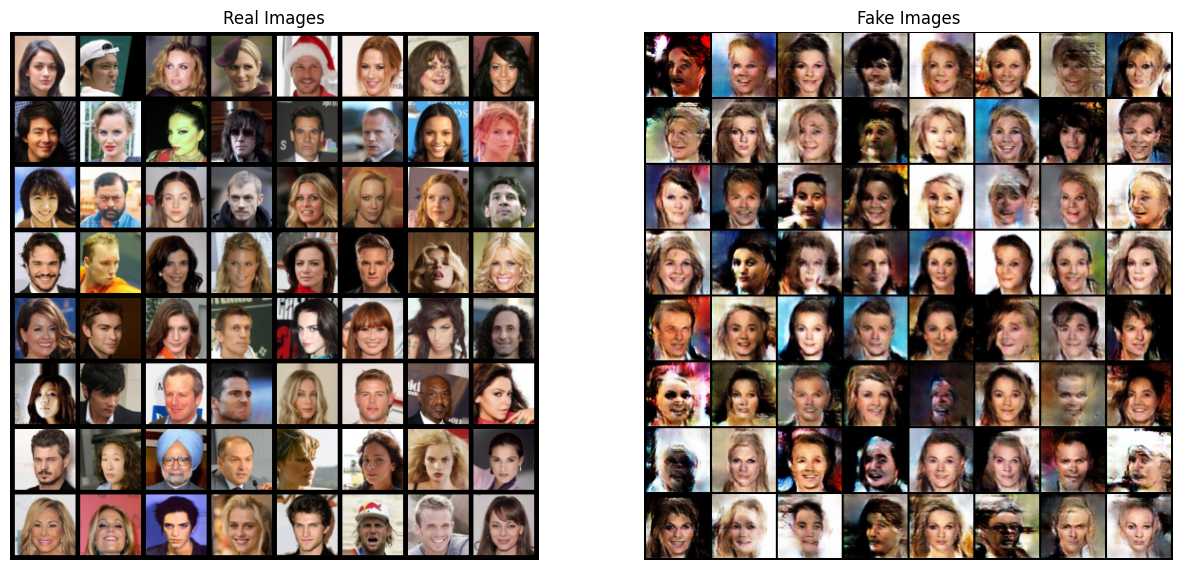}
\caption{Side-by-side comparison of images, which is comparable to the quality \href{https://pytorch.org/tutorials/_images/sphx_glr_dcgan_faces_tutorial_003.png}{here}.}
\end{figure}

\subsection{Recommendation system}
We apply GeN-Adam
% $(\Phi=12)$
to BERT \cite{devlin2018bert} model for recommendation system, following the setting in \cite{sun2019bert4rec}, with a batch size 1024 and training for 100 epochs. We train on MovieLens-1m dataset, containing 1 million user-rating pairs. The performance is measured by the recall and Normalized Discounted Cumulative Gain (NDCG, related to the precision). The baseline is a constant learning rate given by the codebase.
% \url{https://github.com/jaywonchung/BERT4Rec-VAE-Pytorch/tree/master}
We observe that GeN outperforms the state-of-the-art setting (batch size 128) but needs a larger batch size to converge stably.

\begin{table}[!htb]
    \centering
\resizebox{\linewidth}{!}{
\begin{tabular}{c|c||c|c|c|c||c|c|c|c}
&batch size&Recall@1&Recall@5&Recall@10&Recall@20&NDCG@1&NDCG@5&NDCG@10&NDCG@20\\
GeN&1024&\textbf{0.405}&\textbf{0.720}&\textbf{0.821}&\textbf{0.900}&\textbf{0.405}&\textbf{0.575}&\textbf{0.607}&\textbf{0.627}  \\
Constant&1024&0.374&0.693&0.796&0.883&0.374&0.546&0.580&0.602 \\
Constant&128&0.397&0.718&0.820&0.895&0.397&0.570&0.603&0.622 \\
    \end{tabular}
    }
    \caption{Performance of top K recommendations (Recall@5 means the recall of $K=5$).}%Validation, not test, though we are also better
\end{table}

\begin{figure}[!htb]
    \centering
\includegraphics[width=0.3\linewidth]{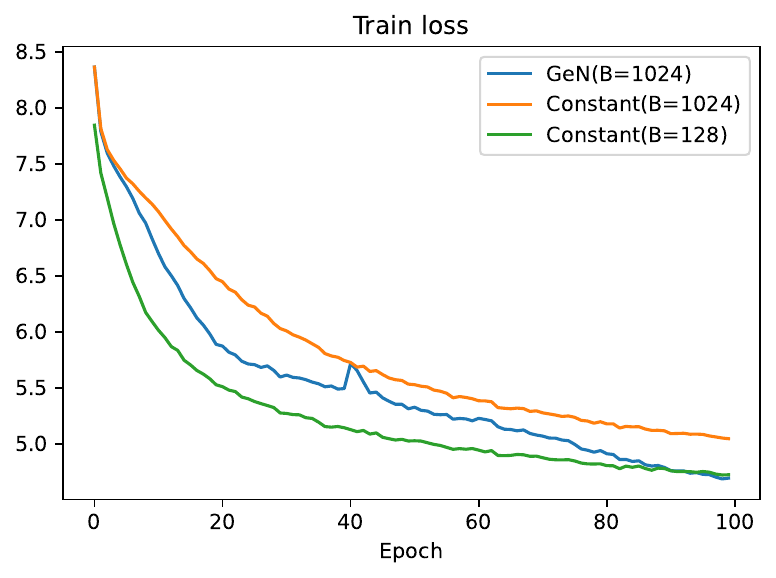}
\includegraphics[width=0.3\linewidth]{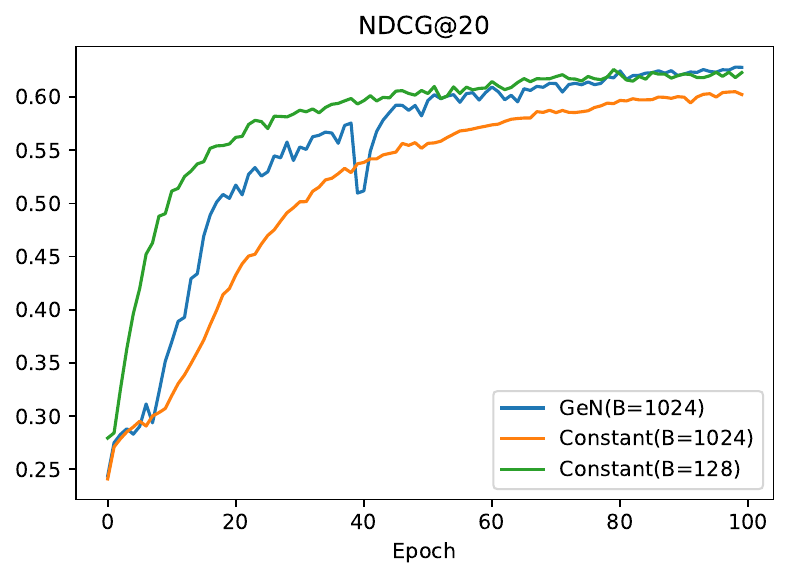}
\includegraphics[width=0.3\linewidth]{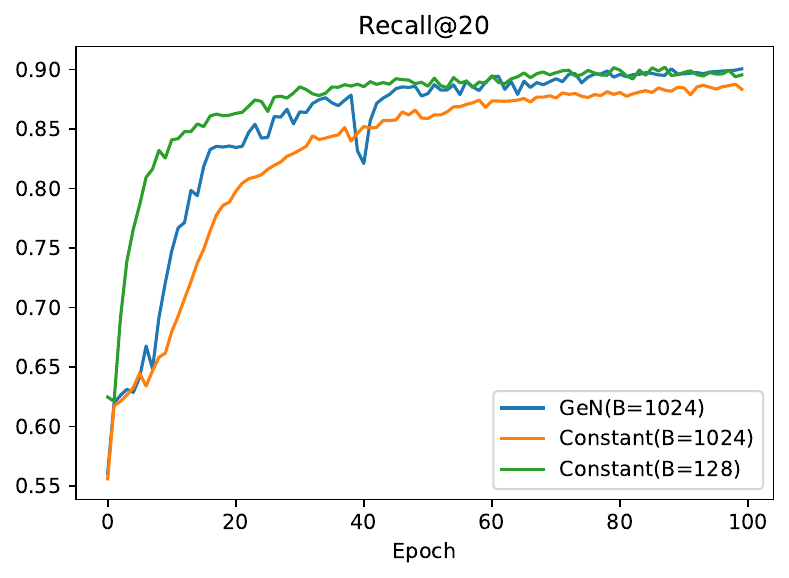}
    \caption{Convergence of GeN-Adam and Adam with constant learning rates.}
\end{figure}

\section{More on the design of \Cref{alg:autoSGD}}
\label{app:design alg}

\subsection{Two modes of forward pass }
As we discussed in \Cref{sec:algo}, there are two modes of forward pass: with or without the activation. If the forward pass is followed by the back-propagation, then it must store the activation in each layer. The activation is $\R^{BTd}$ where $B$ is the batch size (say 256), $T$ is the sequence length (say 2048 for LLAMA) or pixels (say 224$\times$224 for ImageNet), and $d$ is the input dimension of one layer (say 768 for a transformer block). In practice, the activation is very expensive to store and could take up 95\% of the total memory cost (see \cite{jain2020checkmate} Figure 3).

On the other hand, the forward pass can be activation-free if we only do inference, not the training. In Pytorch, this is enabled by \texttt{with torch.no\_grad()}. We can leverage the saved memory to use larger batch size and thus accelerate the computation, further reducing the GeN overhead.

\subsection{Choice of finite difference}
We extend our discussion on fitting the quadratic function. In general, we can learn $\g^\top\H\g$ and $\G^\top\g$ by fitting any collection of finite differences, say $\{\xi_i\}$, to the corresponding losses $\{L(\w_t-\xi_i\g_t)\}$. For instance, we can use more than 3 points or use non-symmetric sequences like $(0,\xi,2\xi)\to( L(\w_t), L(\w_t-\xi\g_t), L(\w_t-2\xi\g_t))$. 

We recommend to use the symmetric sequences like $\{-2\xi,-\xi,0,\xi,2\xi\}$ for better precision (c.f. \Cref{proof:FD and fit}). Nevertheless, this introduces a hyperparameter $\xi$ and much churn to tune it, as is the case in zero-th order optimization \cite{malladi2023fine}. Therefore we use the symmetric and auto-regressive sequences
that employ the previous learning rates: $(-2\eta_{t-1},-\eta_{t-1},0,\eta_{t-1},2\eta_{t-1})$. Lastly, to reduce the computation overhead, we use the fact that any quadratic function is uniquely determined by the three points. Therefore, we can use $(-\eta_{t-1},0,\eta_{t-1})$ as the shortest sequence that is both symmetric and auto-regressive.

\subsection{Smoothing}
In \Cref{alg:autoSGD}, we propose to use the smoothing (e.g. setting $\gamma=0.9$) as a standard technique in the deep learning optimization. Note the smoothing is also known as the momentum. For example, Nesterov accelerated gradient and SGD both use the smoothing on the gradients; Adam and Sophia additionally use smoothing on the pre-conditioner. We empirically confirm that the smoothing helps stabilize the training in all our experiments.

% scale

\subsection{Initial learning rate}

GeN optimizer and existing parameter-free methods such as D-adaptation \cite{defazio2023learning} and DoG \cite{ivgi2023dog} still need someone hyperparameters. To be specific, D-adaptation needs an initial D estimate and a growth rate $\approx 1$ if the training is unstable; DoG needs to carefully select $r_\epsilon$ to compute the initial distance: too large values could make DoG diverge, while too small values cannot optimize the model. In our case, GeN only needs one hyperparameter -- the initial learning rate $\eta_0$, which can be determined with nearly zero effort of manual tuning. We now introduce two methods to set $\eta_0$.

\subsubsection{Automatic correction}
Empirically, GeN optimizers have the capability to automatically correct the learning rate if $\eta_0$ is set too large or too small. We test a wide range of learning rates, with the largest one being $1000\times$ larger than the smallest, all resulting in similar final performance. This capability allows the practitioners to start the training with a highly robust choice of $\eta_0$.
\begin{figure}[!htb]
    \centering
    \includegraphics[width=0.3\linewidth]{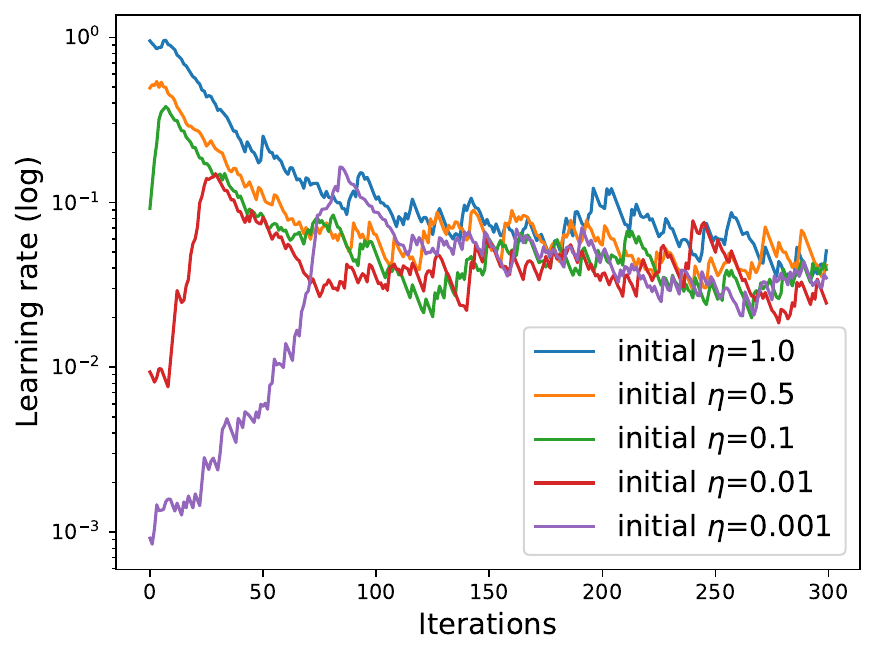}
    \includegraphics[width=0.3\linewidth]{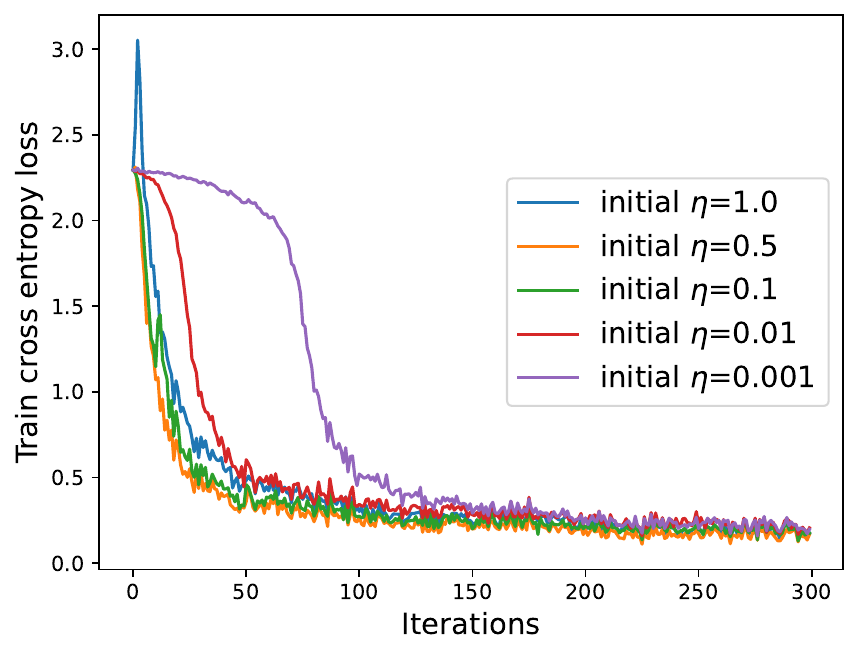}
    \\
    \includegraphics[width=0.3\linewidth]{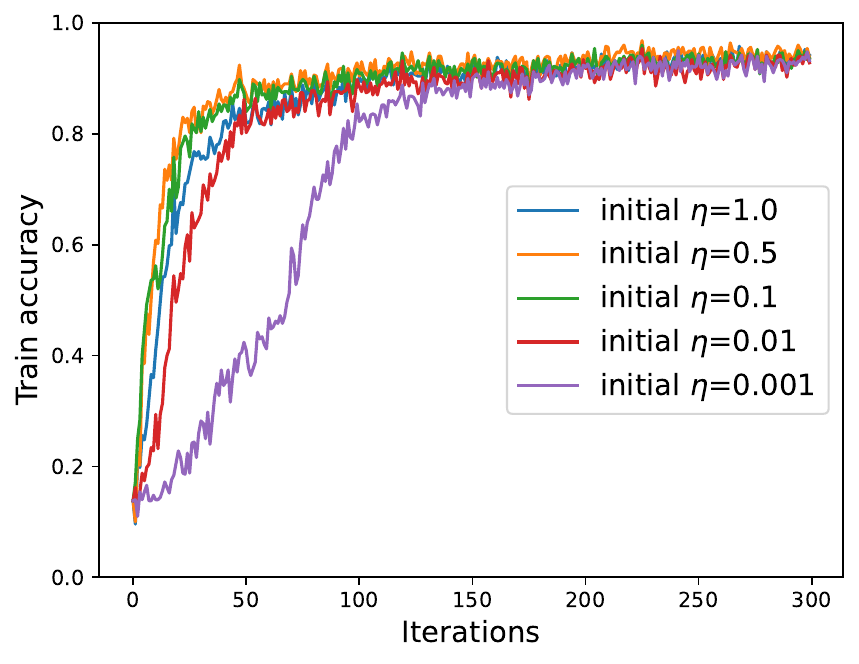}
    \includegraphics[width=0.3\linewidth]{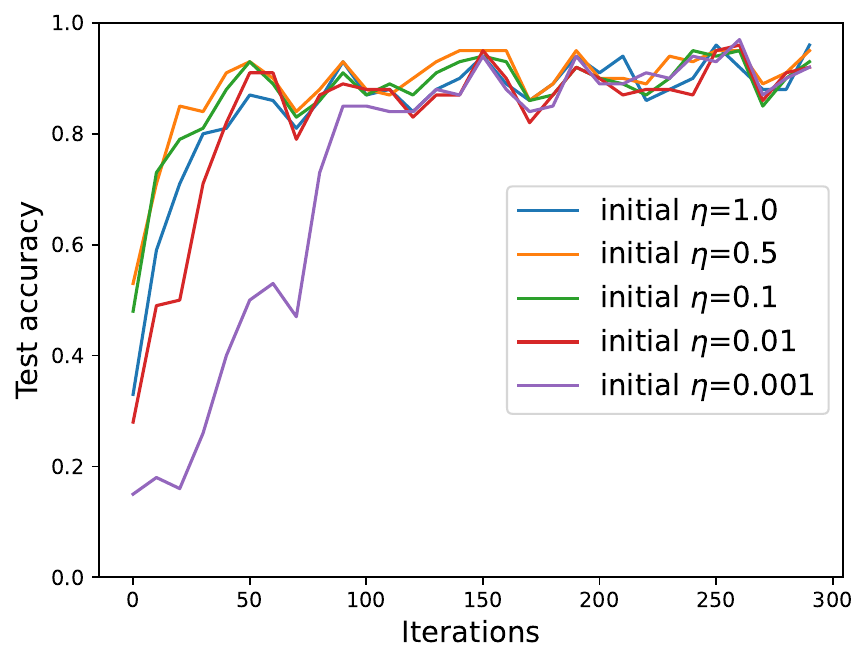}
    \caption{Convergence of ResNet18 on CIFAR10, optimized by GeN-SGD with $B=200$.
    % 2 epoch,
    \label{fig:auto correct lr}
    }
\end{figure}

In \Cref{fig:auto correct lr}, smaller $\eta_0$ (say $\leq 0.1$) quickly catches up at an exponential speed then stabilizes, whereas larger $\eta_0$ tends to decay from the beginning of training. We recommend setting $\eta_0$ with small values, to avoid risking the possibility of loss divergence.

\subsubsection{Automatic search}
Alternatively, we can search the $\eta_0$ automatically at the first iteration. The computation overhead is negligible since it will be amortized over a large number of iterations. In practice, we observe that an exponential grid search ranging from $10^{-6}\sim 10^2$ can quickly give an accurate estimate of $\eta_0$. Our experiment in the same setting as \Cref{fig:auto correct lr} selects $\eta_0=0.008$, close to the red curve.

% \begin{figure}[!htb]
% \centering
% \includegraphics[width=0.24\linewidth]{}
% \includegraphics[width=0.24\linewidth]{}
% \includegraphics[width=0.24\linewidth]{}
% \caption{\textcolor{red}{[Re-Do!]}Automatic search for initial learning rates with exponential grid search $\eta\in\{10^{-6}\cdot 1.3^k\}$, on CIFAR10 dataset. Top to bottom: ResNet50 training from scratch, ResNet50 from pretrained weights, ViT-large from pretrained weights. For large learning rate, \eqref{eq:lr parabola} does not characterize the loss improvement, and fine-tuning requires much smaller learning rate than training from scratch.
% }
% \label{fig:initial parabola fits}
% \end{figure}

\subsection{Exact algorithm for GeN}
We give an algorithm that computes \eqref{eq:autosgd} precisely using higher order differentiation. In contrast to \Cref{alg:autoSGD} that approximately computes GeN, this algorithm uses the second-order back-progation, which is generally inefficient and incompatible with large-scale distributed systems due to lack of support.
\begin{algorithm}[!htb]
\caption{Generalized Newton's optimizers with higher order differentiation (GeN-HOD)}
\begin{algorithmic}[1]
\For{$t\in 1,\cdots,T$}
\State Compute loss $L(\w_t)$ by the standard forward pass
\State Compute vanilla gradient $\g_t$ by the first-order back-propagation from $L(\w_t)\in\R$
\State Compute modified gradient $\g_t^\text{optim}(\w_t)$
\State Compute the Hessian-vector product $\H_t\g_t^\text{optim}$ by the second-order back-propagation from $\g_t^\top\g_t^\text{optim}\in\R$
\State Derive $\G_t^\top\g_t^\text{optim}$ and $(\g_t^\text{optim})^\top\H_t\g_t^\text{optim}$ by multiplication
\State Derive the optimal learning rate $\eta_t=\frac{\G_t^\top\g_t^\text{optim}}{(\g_t^\text{optim})^\top\H_t\g_t^\text{optim}}$ by \eqref{eq:optimal lr}
\State Update $\w_{t+1}=\w_t-\eta_t \g_t$
\EndFor
\end{algorithmic}
\label{alg:autoSGD exact}
\end{algorithm}
\subsection{Preliminary results on pre-training}
Following \url{https://github.com/kuangliu/pytorch-cifar.git}, we experimented on CIFAR10 training from scratch on ResNet18. We replace SGD with AdamW at learning rate 1e-3 but the rest settings are the same as in the repository. We use $\Phi=4$ for GeN. For D-adaptation, we use their AdamW version for a fair comparison.
\begin{figure}[!htp]
    \centering
    \includegraphics[width=0.8\linewidth]{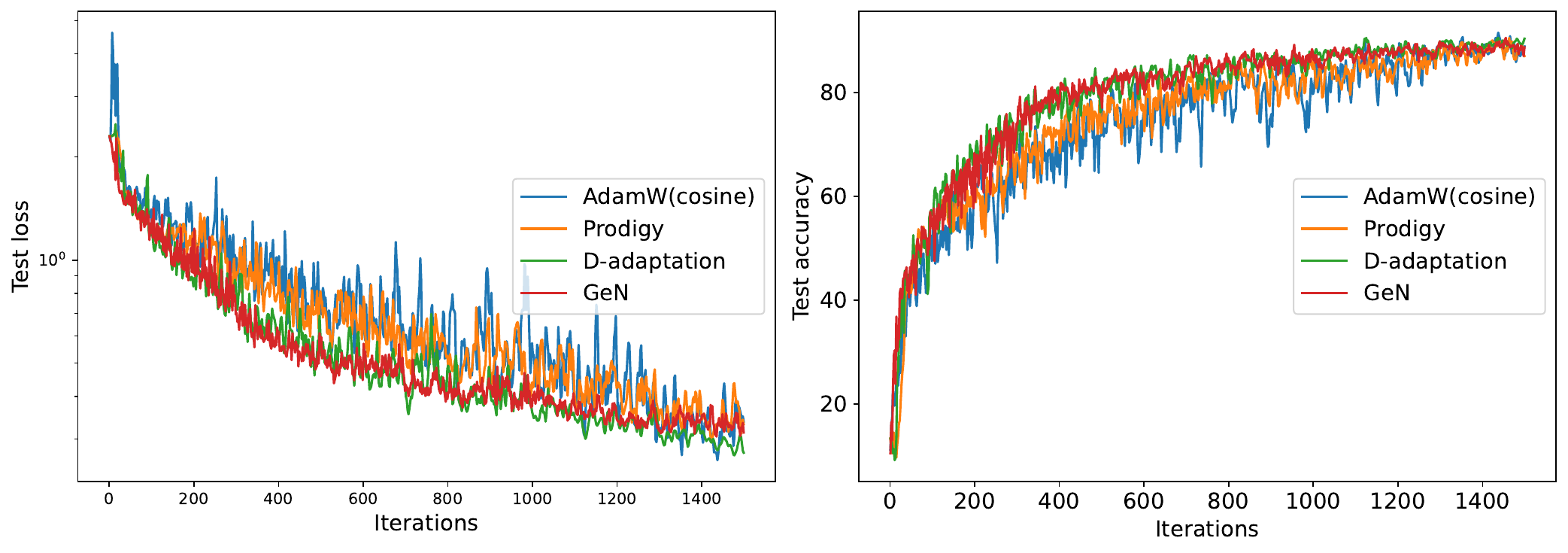}
    \caption{CIFAR10 pre-training on ResNet18 for 15 epochs.}
    \label{fig:enter-label}
\end{figure}

\subsection{Dependence on total steps}
Total steps $T$ is an important factor in existing learning rate schedules (e.g. linear and consine decay in \Cref{tab:lr basic scheduler}): larger $T$ allows longer large-learning-rate-training, while smaller $T$ requires early learning rate decay. Our GeN in \eqref{eq:optimal lr} is originally proposed as $T$-independent. Yet it may be slightly improved with a dependence of $T$, especially if $T$ is relatively small:
\begin{align}
\eta_\text{GeN}^*(\g_t^\text{optim}) \longleftarrow \eta_\text{GeN}^*(\g_t^\text{optim})\cdot (1-t/T)
\end{align}
where the last term is a hyperparameter-free factor, decaying from 1 to 0. The form of this factor is flexible, e.g. instead of linear decay, we can also use cosine decay like in D-adaptation \cite{defazio2023learning} and Prodigy \cite{mishchenko2023prodigy}, which denote such factor as $\gamma_k$. 

We have used this variant of GeN in the experiments of Places365, INat2021 and NLU.

\section{Related works}
\label{app:related}

\subsection{Heuristic learning rate scheduler}

A proper learning rate is important for fast convergence. There is a long line of work making efforts to search the best hyper-parameter, especially the learning rate. In general, the learning rate should decay to 0 as the training iteration increases. This is due to the mini-batch sampling which does not guarantee that SGD converges even in the convex setting. As a consequence of this theoretical insight, different learning rate schedulers have been devised, albeit usually based on the empirical evidence. 

\begin{table}[!htb]
    \centering
    \resizebox{\linewidth}{!}{
    \begin{tabular}{c|c|c|c|c|c}
type& scheduler&  $\eta_t$ form&HP&\# HP& reference\\\hline
% \hline
% \multirow{6}{*}{Automatic}
% & use AdaGrad-Norm step sizes & $\frac{d_t}{\sqrt{\sum_{i=0}^k \|g_i\|^2}}$& & &\\\cline{2-6}
% & \href{https://arxiv.org/abs/2301.07733}{D-Adaptation} &same above, but dt is a function of G & & &\\\cline{2-6}
% & \href{https://arxiv.org/pdf/2306.06101.pdf}{Prodigy} & a follow-up of D-adaptation? What's the difference& & &\\\cline{2-6}
% & \href{https://arxiv.org/pdf/2310.07831.pdf}{Adaptive Learning Rate Scheduling} &Can be taken as an automatic lr method? & & &\\\cline{2-6}
% & \href{https://arxiv.org/abs/2302.12022}{DoG} & & & &\\\cline{2-6}
% & \href{https://arxiv.org/pdf/2304.05187.pdf}{AGD} & & & &\\\cline{2-6}
% & \href{https://arxiv.org/pdf/2004.03260.pdf}{LQA} (same as ours)  & & & &\\\cline{2-6}
% & Ours &$\frac{\partial L}{\partial\w_t}^\top\g_t  \Big/ \g_t^\top \frac{\partial^2 L}{\partial\w_t^2}\g_t$& None &\textbf{0}&\Cref{eq:optimal lr}\\\hline

\multirow{7}{*}{Heuristic}&Constant & $c_0$ &$c_0$&1&\cite{raffel2020exploring}\\\cline{2-6}
&Cosine decay & $c_0(1+\cos{(t\pi/T)})/2$& $c_0$&1&\cite{loshchilov2016sgdr,radford2021learning}\\\cline{2-6}
&Stepwise & $c_k$ if $T_{k-1}<t<T_k$ &$\{c_k,T_k\}_{k\leq K}$&$2K+1$&---\\\cline{2-6}
&Linear decay & $c_0 (1-t/T)$ &$c_0$&1&\cite{smith2015no}\\\cline{2-6}
&Polynomial decay & $c_0/t$ or $c_0/\sqrt{t}$ &$c_0$&1&---\\\cline{2-6}
&Exponential decay & $c_0 \exp(-pt)$ &$c_0,p$&2&---\\\cline{2-6}
&WarmpUp & $c_{\min}+\left(\frac{t(c_0-c_{\min})}{pT}\right)$&$c_0,c_{\min},p$&3&\cite{goyal2017accurate}\\\hline
% \texttt{CosineAnnealingWarmRestarts} & &&3&---\\
% \texttt{ReduceLROnPlateau} & &&6&---\\
% \texttt{OneCycleLR} & &&6&---\\
% \texttt{CyclicLR} & &&8&---
    \end{tabular}
    }
    \caption{Learning rate schedulers for training with $T$ iterations. HP means hyperparameter.}
    \label{tab:lr basic scheduler}
\end{table}
In deep learning, state-of-the-art performance is usually obtained by a multi-hyparameter scheduler: for instance, LLAMA \cite{touvron2023llama} combines linear WarmUp and cosine decay (not decaying to 0 but to a minimum value); ResNet \cite{he2016deep} uses the stepwise scheduler (starting at $\eta= 0.1$ and decay by 10 times at 32k and 48k iterations) with 5 hyperparameters. While tuning multiple hyperparameters may be feasible for some models, it is generally expensive for large models (say over 1B parameters).

% In fact, LLAMA1/2 uses 3 hyperparameters, \url{https://openreview.net/pdf?id=rhDaUTtfsqs} uses 3, \url{https://arxiv.org/pdf/2204.06745.pdf} SimCLR uses 2, ResNet (sec 4.2) uses 4, ViT uses 2, Attention Is All You Need uses 2, GPT1/BERT/roberta uses 2, Mask R-CNN/Faster R-CNN uses 3

\subsection{Automatic learning rate}
% 1.  AdaGrad-Norm uses step sizes  $\eta_t \frac{d_t}{\sqrt{\sum_{i=0}^k \|g_i\|^2}}$. 
A number of automatic or parameter-free learning rate methods have attempted to dynamically adjust the learning rate throughout the training. D-adaptation \cite{defazio2023learning} and Prodigy \cite{mishchenko2023prodigy} both improve on AdaGrad by characterizing $||\w_0-\w_*||$, where $\w_*$ is the global minimum. However, this motivation is not well-defined in deep learning as the minimum is not unique. In fact, for all our computer vision experiments, D-adapt SGD with even a small magnitude of weight decay diverges. Also Prodigy only implements to Adam, hence not future-proof for more advanced algorithms such as Sophia.

DoG \cite{ivgi2023dog} and DoWG \cite{khaled2023dowg} are parameter-free dynamic SGD scheduler with convergence guarantees for stochastic convex optimization. Additionally, these methods only work for SGD but not for the adaptive optimizers like Adam, and the algorithms requires to store the past iterates (because the output is the weighted $\sum_t c_t\w_t$ in Equation 2 of \cite{ivgi2023dog} and Theorem 1 of \cite{khaled2023dowg}, instead of the last iterate $\w_T$). As the authors admit in their \href{https://github.com/formll/dog/blob/main/src/dog/dog.py}{Github}: `DoG must be combined with iterate averaging', which adds to the memory cost that may be unacceptable for large models. In practice, we have observed these methods to be very unstable and inaccurate, e.g. on toy tasks like fine-tuning CIFAR10/100.

\subsection{Second-order methods}
There is a long list of second-order or quasi-second-order methods in the optimization literature, including but not limited to stochastic Newton methods \cite{byrd2016stochastic,wang2017stochastic,lucchi2015variance,schraudolph2007stochastic}, sketching methods \cite{yuan2022sketched,luo2016efficient} and diagonal Hessian methods in \Cref{sec:no back}. We notice these methods are difficult to implement in distributed learning at large scale, because the Hessian information (or Hessian-vector product) is generally not supported by auto-differentiation or too expensive to store for large models.

\subsection{Prior works similar to GeN}
\label{app:compare LQA}
We elaborate multiple prior works that also present GeN-SGD in \eqref{eq:optimal lr} and leverage the second-order Taylor approximation in \eqref{eq:lr parabola}. However, these methods are limited to $\g^\text{optim}=\g^\text{SGD}$ and not generalize to pre-conditioned gradients like in AdamW. As a consequence, the role of pre-conditioning of gradient is under-investigated, and the connection to the generalized inverse in \eqref{eq:general inverse} and to Newton's method in \Cref{thm:generalized} is lacking.

Importantly, a missing piece in these works is to validate the second-order Taylor approximation of $L(\w-\eta\g)$. Our empirical visualization in \Cref{fig:measure vs lr} serves as a necessary validation, without which the method and algorithms are not justified in the first place\footnote{In fact, Figure 1 in \cite{fu2024qlabgrad} contradicts their own motivation in Equation (4), which is the Taylor expansion at $\eta=0$, since there is no quadratic curve between $\eta\in[0,\alpha]$.}.

Experiment-wise, prior works are focused on computer vision and convolutional neural networks, whereas we have extensively tested various model architectures (transformers, GAN, recommendation systems) over many tasks like language modeling.

\paragraph{LQA}
The closest work to ours is LQA \cite{zhu2021automatic} which proposes \eqref{eq:LQA}. 

At the method level, this method (termed LQA-SGD) is equivalent to GeN-SGD but it is not extended to general optimizers: in their Figure 1-4, where the red curves in each figure are exactly the same, LQA-SGD is compared to SGD, SGD-momentum, and Adam; but there is no LQA-Adam. Additionally, the convergence speed of LQA-SGD (or GeN-SGD) is missing, whereas we provide an analysis under the convex Lipschitz regime in \Cref{app:convergence}.

At the algorithm level, LQA relies on estimating the losses at $\{L(\w-\eta\g),L(\w),L(\w+\eta\g)\}$:
$$\eta_\text{LQA}^*=\frac{\eta}{2}\frac{L(\w+\eta\g)-L(\w-\eta\g)}{L(\w+\eta\g)-2L(\w)+L(\w-\eta\g)}$$
This fixed choice of learning rates gives a closed form but could limit the general applicability when one considers other learning rates (see our discussion below \eqref{eq:LQA}).
The estimation error of LQA was not analyzed. In contrast, our \Cref{prop:opb error} indicates the benefit of a large batch size which is empirically verified in \Cref{fig:different B}. Additionally, we adopt critical techniques in \Cref{rem:flexibility} like smoothing and lazy frequency, in order to stabilize the training and enhance the computational efficiency, which are not presented in LQA.

\paragraph{QLABGrad}
While LQA and GeN requires 2 extra forward passes, QLABGrad \cite{fu2024qlabgrad} only requires 1 extra forward passes by restricting to SGD (hence incompatible to AdamW and other adaptive optimizers).
$$\eta^*_\text{QLABGrad}=\frac{ \eta^2}{2}\frac{\|\nabla L\|^2}{L(\w-\eta\g)-L(\w)+\eta\|\nabla L\|^2}.$$
QLABGrad provides a convergence analysis in terms of the gradient norm, i.e. $||\nabla L_t||\to 0$, whereas our convergence analysis in \Cref{app:convergence} is in terms of $L_t\to 0$.

\paragraph{ADLER}
ADLER \cite{balboni2023adler} uses Hessian-vector product to compute $\H_t\g_t$ (viewed as a sub-case of \Cref{alg:autoSGD exact}). We note that Hessian-vector product is not only slow, but also not supported yet in distributed learning systems. Nevertheless, ADLER indeed implements \eqref{eq:autosgd} without approximating $\eta^*_\text{GeN}$, whereas the estimation error must be considered for other algorithms since $\eta^*_\text{LQA}\approx\eta^*_\text{GeN}\approx \eta^*_\text{QLABGrad}$.

\subsection{Applying GeN to more problems}
We expect general applicability of GeN to problems beyond the formulation of \eqref{eq:SGD}. Examples include differentially private optimization \cite{liutowards}, multi-task optimization \cite{jin-etal-2025-unlearning}, gradient ascent, projected/proximal gradient descent, reinforcement learning, etc.

\section{Convergence analysis of GeN}
\label{app:convergence}
We provide the convergence analysis of GeN for multiple dimension, given that the 1-dimensional GeN is equivalent to the classic Newton-Raphson method. We work with the convex and Lipschitz conditions, the same conditions on which Prodigy and D-adaptation are proven to converge. We note that, similar to the proof of Newton's method, we also need to bound the third order derivative and GeN can enjoy the quadratic convergence rate.

\begin{theorem}
For convex and $G$-Lipschitz loss $L$, assuming $\H(x)\neq \bm{0} \forall x$ (i.e. no completely flat point), then GeN-GD in \eqref{eq:autosgd} with $\g\equiv \G$ gives
$$\|\w_{t+1}-\w_*\|\leq \M\|\w_t-\w_*\|^2$$
and
$$L_{t}-L_*\leq G\|\w_t-\w_*\|\leq G\M^{2^t-1}\|\w_0-\w_*\|^{2^t}$$
where $\w_*$ is the minimum of $L$, $\M=\sup_x ||\kappa(x)||\cdot \sup_x 1/||\H(x)||$, and $\kappa$ is the third order derivative of $L$. Furthermore, if $\M||\w_0-\w_*||<1$, then we have the quadratic convergence  $\w_t\to\w_*$ as $t\to\infty$.
\end{theorem}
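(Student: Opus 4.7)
The plan is to mirror the classical single-variable Newton--Raphson convergence argument: first convert $\w_{t+1}-\w_*$ into a Taylor remainder driven by the third derivative $\kappa$, then iterate the one-step contraction to get the double-exponential decay, and finally translate the parameter bound into a loss bound via the Lipschitz hypothesis.

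First I would Taylor-expand $\G(\w_*)=\bm{0}$ around $\w_t$ with integral remainder, writing $\e_t:=\w_t-\w_*$ and obtaining
$$\bm{0} \;=\; \G_t - \H_t \e_t + \tfrac{1}{2}\kappa(\xi_t)[\e_t,\e_t]$$
for some $\xi_t$ on the segment between $\w_t$ and $\w_*$. Rearranging gives $\G_t = \H_t\e_t + \rho_t$ with $\|\rho_t\|\leq \tfrac{1}{2}\sup_x\|\kappa(x)\|\cdot\|\e_t\|^2$. This single identity drives the whole argument, because it says that at the current iterate the gradient already factors through $\H_t$ up to a cubic-curvature error.

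Second I would substitute this expansion into the GeN update $\e_{t+1}=\e_t-\lambda_t\G_t$ with $\lambda_t=\|\G_t\|^2/(\G_t^\top\H_t\G_t)$. The key observation is that $\lambda_t$ is exactly the one-dimensional Newton step of the restricted function $\alpha\mapsto L(\w_t-\alpha\G_t)$ at $\alpha=0$, so it normalises $\H_t$ against $\G_t$ along the gradient direction; combined with the residual $\rho_t$, one obtains
$$\e_{t+1} \;=\; (\e_t-\lambda_t\H_t\e_t) \;-\; \lambda_t\rho_t.$$
Using convexity together with $\H(x)\neq\bm{0}$ to bound $\lambda_t\leq\sup_x 1/\|\H(x)\|$ and to handle the directional cancellation of the first bracket would yield the one-step contraction $\|\e_{t+1}\|\leq\M\|\e_t\|^2$.

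Third, I would iterate: setting $a_t:=\M\|\e_t\|$ so that $a_{t+1}\leq a_t^2$, induction gives $a_t\leq a_0^{2^t}$ and hence $\|\e_t\|\leq\M^{2^t-1}\|\e_0\|^{2^t}$, from which quadratic convergence follows as soon as $\M\|\e_0\|<1$. The loss bound $L_t-L_*\leq G\|\e_t\|$ is then immediate from $G$-Lipschitzness applied along the segment from $\w_*$ to $\w_t$, and substituting the iterated estimate yields the stated double-exponential rate.

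The main obstacle is the second step. In one variable the cancellation $e_t-G_t/H_t=O(e_t^2)$ is automatic because $\lambda_t=1/H_t$ literally inverts the leading term. In several variables, the scalar $\lambda_t$ cannot invert the vector equation $\H_t\e_t\approx\G_t$ coordinate-wise, so one must exploit the Rayleigh-quotient structure of $\lambda_t$ and control the angle between $\G_t$ and $\e_t$ (using convexity of $L$) to show the first bracket $\e_t-\lambda_t\H_t\e_t$ is genuinely of order $\|\e_t\|^2$ rather than merely $\|\e_t\|$. This is where the assumption $\H(x)\neq\bm{0}$ has to be upgraded into a quantitative directional lower bound of the form $\G_t^\top\H_t\G_t\gtrsim\|\H_t\|\cdot\|\G_t\|^2$ on the relevant subspace, and it is the technical crux of the proof.
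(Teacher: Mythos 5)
Your route is the same as the paper's: Taylor-expand $\G(\w_*)=\bm{0}$ about $\w_t$ to get $\G_t=\H_t\e_t+\rho_t$ with $\|\rho_t\|=O(\|\e_t\|^2)$, substitute into the update $\e_{t+1}=\e_t-\lambda_t\G_t$ with $\lambda_t=\|\G_t\|^2/(\G_t^\top\H_t\G_t)$, and iterate the one-step contraction; the doubling induction and the loss bound via $G$-Lipschitzness are also identical. The difference is that you explicitly flag the decisive step --- showing the bracket $\e_t-\lambda_t\H_t\e_t$ is $O(\|\e_t\|^2)$ rather than $O(\|\e_t\|)$ --- as an unresolved obstacle, whereas the paper dispatches it by left-multiplying the scalar identity $\G_t^\top\H_t\e_t-\G_t^\top\G_t=\G_t^\top\kappa(\xi_t)[\e_t]\e_t$ by the generalized right inverse $\G_t/(\G_t^\top\H_t\G_t)$. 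That operation replaces $\e_t$ by its oblique rank-one projection $\G_t(\G_t^\top\H_t\e_t)/(\G_t^\top\H_t\G_t)$ onto $\mathrm{span}(\G_t)$, and the paper's chain of equalities holds only when this projection fixes $\e_t$, i.e.\ when $\e_t$ is parallel to $\G_t$ --- automatic in dimension one, false in general. So you have correctly located the crux, and your account of the argument is in fact more candid than the paper's.

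However, the repair you sketch (upgrading $\H(x)\neq\bm{0}$ and convexity into a Rayleigh-quotient/angle bound giving $\e_t-\lambda_t\H_t\e_t=O(\|\e_t\|^2)$) cannot succeed, because the one-step contraction fails already for convex quadratics. Take $L(x,y)=\tfrac12(x^2+4y^2)$, so $\kappa\equiv\bm{0}$ and hence $\M=0$: the claimed bound $\|\e_{t+1}\|\leq\M\|\e_t\|^2$ would force exact convergence in one step. But GeN-GD on a quadratic is precisely steepest descent with exact line search, and from $\w_0=(4,1)$ one computes $\G_0=(4,4)$, $\lambda_0=32/80$, and $\w_1=(2.4,-0.6)\neq\w_*=(0,0)$. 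Whenever $\e_t$ is not an eigenvector of $\H_t$, the bracket is genuinely $\Theta(\|\e_t\|)$ and no quantitative directional lower bound on $\G_t^\top\H_t\G_t$ changes that, since the obstruction is the misalignment of $\H_t\e_t$ with $\e_t$, not the size of the denominator. Closing the argument requires restricting to $d=1$, assuming $\G_t\parallel\e_t$ throughout, or weakening the stated conclusion.
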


\begin{proof}
Denote $\e_t=\w_t-\w_*$, then
$$\w_{t+1}=\w_t-\frac{\G\G^\top\G}{\G^\top\H\G}\Big|_{\w_t} 
\Longrightarrow
\e_{t+1}=\e_t-\frac{\G\G^\top\G}{\G^\top\H\G}\Big|_{\e_t+\w_*}$$
Taylor expansion on the gradient at $\w_t$ leads to
$$\G(\w_*)=\G(\w_t)-\H(\w_t)\e_t+\kappa(\xi_t)[\e_t]\e_t$$
where $\G(\w_*)=\bm{0}\in\R^d$ and $\kappa(\xi_t)=\nabla^3 L(\xi_t)=\nabla \H(\xi_t)\in\R^{d\times d\times d}$ is the third-order remainder. We have denoted the directional derivative $\kappa(x)[\e_t]=\lim_{h\to 0}\frac{\H(x+h\e_t)-\H(x)}{h}\in\R^{d\times d}$.

Left-multiplying by $\G_t^\top=\G(\w_t)^\top$ gives
$$\G_t^\top\H_t\e_t-\G_t^\top\G_t=\G_t^\top\kappa(\xi_t)[\e_t]\e_t$$
Multiplying the generalized inverse $(\G_t^\top\H_t)^{-1}$ gives
$$\e_{t+1}=\e_t-\frac{\G_t\G_t^\top\G_t}{\G_t^\top\H_t\G_t}=\frac{\G_t\G_t^\top\kappa(\xi_t)[\e_t]\e_t}{\G_t^\top\H_t\G_t}.$$

By the matrix norm inequality, we obtain
$$||\e_{t+1}||\leq \M||\e_t||^2$$
i.e. the quadratic convergence in the parameter space. Furthermore, the Lipschitz condition allows the quadratic convergence in the loss space.
\end{proof}

\end{document}